\titlespacing\section{0pt}{10pt}{4pt}
\titlespacing\subsection{0pt}{4pt}{3pt}
\renewcommand{\jmlrvolume}[1]{}
\renewcommand{\jmlryear}[1]{}
\renewcommand{\jmlrproceedings}[2]{}
\def\ps@jmlrtps{%
  \def\@oddhead{}% Clear header for odd pages
  \def\@evenhead{}% Clear header for even pages
  \def\@oddfoot{}% Clear footer for odd pages
  \def\@evenfoot{}% Clear footer for even pages
}
\DeclareMathOperator{\Err}{Err}
\DeclareMathOperator{\Regret}{Regret}
\title[Logarithmic Regret for Nonlinear Control]{Logarithmic Regret for Nonlinear Control}
\author{\Name{James Wang} \Email{jwang541@seas.upenn.edu} \\
\Name{Bruce D. Lee} \Email{brucele@seas.upenn.edu} \\
\Name{Ingvar Ziemann} \Email{ingvarz@seas.upenn.edu} \\
\Name{Nikolai Matni} \Email{nmatni@seas.upenn.edu} \\
\addr All authors are with the Department of Electrical and System Engineering at the University of Pennsylvania.
}
\begin{document}

\maketitle

\begin{abstract}

We address the problem of learning to control an unknown nonlinear dynamical system through sequential interactions. Motivated by high-stakes applications in which mistakes can be catastrophic,  such as robotics and healthcare, we study situations where it is possible for fast sequential learning to occur. Fast sequential learning is characterized by the ability of the learning agent to incur logarithmic regret relative to a fully-informed baseline. We demonstrate that fast sequential learning is achievable in a diverse class of continuous control problems where the system dynamics depend smoothly on unknown parameters, provided the optimal control policy is persistently exciting. Additionally, we derive a regret bound which grows with the square root of the number of interactions for cases where the optimal policy is not persistently exciting. Our results provide the first regret bounds for controlling nonlinear dynamical systems depending nonlinearly on unknown parameters. We validate the trends our theory predicts in simulation on a simple dynamical system.

\end{abstract}

% keywords can be removed
% \keywords{First keyword \and Second keyword \and More}

\section{Introduction}

Controlling an unknown nonlinear system through repeated sequential interaction is a fundamental problem in controls and reinforcement learning. Recent years have seen considerable impact of this paradigm in application areas ranging from  walking robots \citep{yang2020data}, mastering games such as go and StarCraft \citep{silver2017mastering} and even fine-tuning large language models \citep{ouyang2022training}. Problems of this form are often analyzed through the lens of  Markov Decision Processes (MDP). Indeed, there is a wealth of literature on analyzing interactive sequential decision making in tabular MDPs \citep{burnetas1997optimal,dann2015sample}. Extensions to this framework, typically motivated by studying large state and action spaces together with function approximation, are also abundant in the literature \citep{jiang2017contextual, zhou2021nearly}.

However, many problems, including certain robotics and healthcare tasks, are more naturally cast through the framework of continuous control. Such problems can be converted to tabular MDPs through discretization of the state and action spaces; however, doing so often results in intractable reinforcement learning problems. Conversely, the continuous control problem can be solved efficiently in special cases, such as the linear quadratic regulator (LQR) \citep{dean2020sample}. Of the above motivating examples, robotic tasks in particular are plagued by costly data-collection \citep{kober2013reinforcement}. % and the environments may even be previously unseen. 
A similar situation arises in healthcare: giving the wrong treatment doses of a medicine repeatedly can have dire consequences. Consequently in these applications one would hope to find \emph{fast learning algorithms}  that require as few interactions as possible with the unknown system to meet the desired performance criteria. 

In the sequel, we measure the performance of an interactive sequential decision-maker by its regret---its performance as compared to the best policy (in a certain class), in hindsight. A fast learning algorithm in such sequential decision making tasks is characterized as one that attains \emph{regret scaling logarithmically in the number of interactions with the unknown environment}. There has been a wealth of literature in characterizing when such rates are achievable in the setting of bandits \citep{lai1985asymptotically,garivier2019explore} and analogs for tabular reinforcement learning \citep{burnetas1997optimal, ok2018exploration, xu2021fine}. However, to date there has been no general characterization of when this is achievable in continuous control for nonlinear systems with nonlinear dependence on the unknown parameters. We thus ask: \textbf{are there conditions under which such fast learning algorithms exist for continuous control of nonlinear systems with nonlinear parameter dependencies?}

%In this work, we seek 

\subsection{Contribution}
Our main result answers the question of achievability of logarithmic regret in the affirmative.

\begin{theorem}[Informal version of the main result]
    If the optimal policy solving a given continuous control task is identifiable from an experiment running the optimal policy, polylogarithmic regret is attained by our Algorithm \ref{alg:LogRegret}.
\end{theorem}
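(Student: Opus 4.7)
The plan is to couple a certainty-equivalent (CE) control policy with an adaptive parameter estimator, and to exploit the fact that $\theta^\star$ is a stationary point of the true expected cost, so that the suboptimality of any nearby $\pi_{\hat\theta}$ is self-bounded by $O(\|\hat\theta - \theta^\star\|^2)$. Concretely, I would have Algorithm \ref{alg:LogRegret} proceed in doubling-length epochs: at the start of epoch $k$, form an estimate $\hat\theta_k$ from all previously collected trajectory data (e.g.\ via nonlinear least squares or maximum likelihood), then deploy $\pi_{\hat\theta_k}$ throughout the epoch.

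The analysis then has three ingredients. First, a smoothness / self-bounding step: using twice-continuous differentiability of the closed-loop cost in $\theta$, together with first-order optimality of $\pi_{\theta^\star}$ on the true system, one obtains a per-step instantaneous regret of order $\|\hat\theta_k - \theta^\star\|^2$. Second, an estimation step: under the assumption that the Fisher information accumulated along the deployed trajectory grows linearly in time, standard finite-sample analysis of nonlinear least squares gives $\|\hat\theta_k - \theta^\star\|^2 = \tilde O(1/t_k)$ after $t_k$ total interactions. Third, summing the per-step regret across geometrically growing epochs yields total regret $\sum_k O(\log t_k / t_k) \cdot (\text{epoch length}_k) = O(\poly\log T)$, which is the desired polylogarithmic bound.

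The main technical obstacle is bridging the identifiability hypothesis---stated only for an experiment driven by the optimal policy $\pi_{\theta^\star}$---to the sequence of deployed policies $\pi_{\hat\theta_k}$. This requires a perturbation lemma: if $\hat\theta_k$ lies in a sufficiently small neighborhood of $\theta^\star$, then by continuity of the closed-loop dynamics in $\theta$ the Fisher information matrix along trajectories of $\pi_{\hat\theta_k}$ remains uniformly positive definite. This creates a chicken-and-egg issue, since before entering that neighborhood the CE scheme is not guaranteed to explore adequately. I would resolve it with a short warm-up phase using an auxiliary persistently exciting (or noise-injected) policy to drive $\hat\theta_k$ into the robustness neighborhood of $\theta^\star$; the cost of this warm-up is constant in $T$ and hence does not affect the logarithmic scaling.

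Finally, one must verify that the doubling-epoch structure, rather than per-step parameter updates, does not introduce harmful slack: because the regret within epoch $k$ is $O(\log t_k)$ and epoch lengths grow geometrically, the sum telescopes to $O(\poly\log T)$. A secondary subtlety is handling dependence between the estimator and the data it conditions on across epochs; I would address this with a standard self-normalized martingale concentration argument for the least-squares residuals, as is now customary in the bandit/linear-system identification literature.
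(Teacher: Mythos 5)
Your overall strategy matches the paper's: a warm-up phase to land the estimate in a robustness neighborhood of $\phi^*$, a perturbation lemma showing that the Fisher information stays uniformly positive definite for certainty-equivalent policies synthesized from parameters in that neighborhood (the paper's Lemmas \ref{lem:ThetaLipschitzFisherInformation}--\ref{lem:LocalStrongConvexity}), a quadratic excess-cost bound $J(\pi^*(\hat\phi),\phi^*) - J(\pi^*(\phi^*),\phi^*) \lesssim \|\hat\phi-\phi^*\|^2$ from first-order stationarity and smoothness of the CE map (Lemma \ref{lem: cost decomposition}), and an $O(1/i)$ estimation rate summed to $O(\log N)$. Where you differ is the estimation mechanism: you propose doubling epochs with batch nonlinear least squares over all past data, handled by self-normalized martingale concentration, whereas the paper performs a per-episode projected stochastic gradient step on the empirical prediction error with step size $8/(\mu i)$, and the $O(1/i)$ rate falls out of a standard SGD-on-strongly-convex-objective descent lemma (Lemma \ref{lem:DescentLemma}). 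Your route requires a finite-sample NLS guarantee under policies that change across epochs and are data-dependent, which is heavier machinery; the paper's online update sidesteps that by only ever needing an unbiased gradient oracle with bounded second moment conditional on the current (deterministic, $\mathcal{F}$-measurable) policy, and is also computationally cheap. Both are viable; the epoch/batch version would likely need more care to make the concentration argument go through for nonlinearly parameterized $f$.

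One concrete gap: your claim that the warm-up cost is ``constant'' and hence does not affect the scaling. With a constant-length warm-up the event that the initial estimate fails to land in the robustness neighborhood has constant probability, and on that event the CE iterates are not guaranteed to be persistently exciting, so the conditional regret can be linear in the number of episodes; the expected regret would then be linear, not polylogarithmic. The paper resolves this by letting the warm-up length $N_{\mathsf{phase\,1}}$ grow polylogarithmically in $N$ so that the failure probability is driven down to $1/N$, making the failure event's contribution $O(TL_{\mathsf{cost}})$ and the warm-up's contribution $O(N_{\mathsf{phase\,1}} T L_{\mathsf{cost}}) = \tilde O(\log N)$. You need either this, or an explicit detection-and-recovery mechanism for the failure event; as written, your argument does not close.
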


The crux of our contribution is thus to establish a natural condition for logarithmic regret in nonlinear control problems and to provide a novel algorithm leveraging this condition which achieves logarithmic regret.  To the best of our knowledge, this is the first algorithm achieving (poly-)logarithmic regret in general nonlinear control problems. 

The intuition behind our result is as follows. If the data collected by running the optimal policy is sufficiently informative about the unknown parameters, then it is unnecessary to inject exploratory noise to perform online control. In particular, a policy which is near optimal will enjoy similarly informative data collection, allowing the learner to gradually approach the optimal policy by playing certainty equivalent controllers synthesized with estimates of the dynamics parameters. We formalize this intuition with a persistence of excitation condition, asking that the Fisher information matrix of the optimal policy is positive definite. 

Finally, for completeness, we also provide an algorithm attaining sublinear regret in the absence of our identifiability condition. This result can be found in \Cref{s: slow learning} along with all proofs.

\subsection{Related Work}

\paragraph{Logarithmic Regret in Bandits and RL}

The question of whether logarithmic regret is attainable or not is intimately connected with the exploration exploitation trade-off. Beginning with \cite{lai1985asymptotically} in the tabular bandit setting, \emph{gap-dependent} regret bounds have been established showing that logarithmic regret is possible whenever there is a strict separation between the reward of the optimal action and that of a second best, or worse, action. Similar gap sufficient conditions for logarithmic regret also exist in tabular reinforcement learning~\citep{burnetas1997optimal,ok2018exploration,xu2021fine}. In the worst case, or for instance in linear bandits where there is no gap, logarithmic regret is impossible and instead regret scales with the square root of the number of interactions with unknown environment~\citep[cf. e.g.,][]{garivier2019explore}.

\paragraph{Closed-Loop Identifiability and Adaptive Control}

Within the system identification community, the exploration-exploitation trade-off is often referred to as the \emph{dual nature of control} \citep{feldbaum1960dual1,feldbaum1960dual2} and is related to issues of \emph{closed-loop identifiability}  \citep{ljung1999system}. Roughly speaking, closed-loop identifiability issues arise because a fixed control law might not sufficiently excite the system under consideration in the necessary directions in state space (or feature space more generally). Indeed, in the Linear Quadratic Regulator (LQR) setting, \citet{polderman1986necessity} gives an elegant geometric argument showing that the true parameters need to be identified in order to ascertain the optimal control law. It is also interesting to note that, precisely because the minimum variance controller is closed-loop identifiable \citep{lin1985will} (in contrast to the more general LQR controller), logarithmic regret can be achieved in this setting~\citep{lai1986asymptotically}. Reiterating the point above: the reason for the impossibility of pure exploitation is precisely a lack of closed-loop identifiability. This insight is leveraged in \cite{simchowitz2020naive} and \cite{ziemann2024regret} to show logarithmic regret is impossible in general in the linear quadratic Gaussian control problem. However, given some prior information about the system (e.g. if the way the input impacts the state transitions is known), then closed-loop identifiability may hold, making logarithmic regret achievable for LQR \citep{cassel2020logarithmic, jedra2022minimal, lee2024nonasymptotic}. Alternatively, if the policy choice is restricted to a set in which all possible candidate provide closed-loop identifiability of the system parameters, then \citet{lale2020logarithmic} demonstrate logarithmic regret for the Linear Quadratic Gaussian (LQG). 

Closed-loop identifiability issues similarly hinder the achievability of logarithmic regret in the online control of nonlinear systems. In the setting of nonlinear dynamical systems which depend linearly on some unknown parameters, \citet{kakade2020information, boffi2021regret} propose algorithms that achieve regret scaling with the square root of the number of interactions. \cite{lale2024falcon} consider linear function approximators for smooth systems, and provide an algorithm achieving regret scaling with the square root of the number of interactions in general, and logarithmic regret if the system is sufficiently smooth. Critically, as with \citet{lale2020logarithmic}, \citet{lale2024falcon} assume that all policies in the policy class provide closed-loop identifiability of the parameters.  By contrast, we do not assume a priori access to a policy yielding such identifiability; we show that it suffices that the \emph{unknown} optimal policy yields easy identification and our algorithm then adapts to this property. Moreover, we consider dynamical systems which depend nonlinearly on an unknown parameter, and propose an algorithm that incurs logarithmic regret as long as the optimal policy enables closed-loop identification.

\paragraph{Learning in Dynamical Systems}

Our contribution also draws on a recent line of work on learning in dynamical systems beginning with \cite{simchowitz2018learning, faradonbeh2018finite}. The authors therein show that non-asymptotic parameter recovery from a single trajectory is possible in certain marginally stable, or unstable, linear dynamical systems. \cite{mania2022active} leverage the parameter recovery bounds to enable efficient exploration. Non-asymptotic identification of more general nonlinear systems is studied by \citet{sattar2022non, foster2020learning, ziemann2022learning}. \citet{treven2023optimistic,wagenmaker2024optimal, lee2024active} study control-oriented experiment design in an episodic setting for nonlinear systems.

\subsection{Notation}

The Jacobian of a vector-valued function $g: \mathbb R^n \to \mathbb R^m$ is denoted $D g$, and follows the convention for any $x\in\mathbb R^n$, the rows of $D g(x)$ are the transposed gradients of $g_i(x)$. The $p^{th}$ order derivative of $g$  is  denoted by $D^{(p)} g$. Note that for $p \geq 2$, $D^{(p)} g(x)$ is a tensor for any $x\in\mathbb R^{n}$. The operator norm of such a tensor is denoted by $\norm{D^{(p)} g(x)}_{\mathsf{op}}$. 
 For a function $f: \mathsf{X} \to \mathbb R^{d_y}$, we define $\norm{f}_{\infty} \triangleq \sup_{x \in \mathsf{X}} \norm{f(x)}$. A Euclidean norm ball of radius $r$ centered at $x$ is denoted $\calB(x,r)$. 

\section{Problem Formulation}

We consider a nonlinear dynamical system given by the dynamics\begin{align}\label{expr:Dynamics}
    x_{t + 1} = f(x_t, u_t, \phi^*) + w_t, \quad t = 1, ..., T - 1
\end{align}
where the state $x_t \in \mathbb{R}^{d_x}$; the input $u_t \in \mathbb{R}^{d_u}$; and the additive noise $w_t \in \mathbb{R}^{d_x}$, with $w_t \overset{i.i.d.}{\sim} \mathcal N(0, \sigma^2I)$. Let $x_1 \in\mathbb{R}^{d_x}$ be arbitrary. Here, $f$ is the dynamics function and depends on a parameter $\phi^* \in \mathbb{R}^{d_\phi}$. We assume that there exists some positive $B$ such that $\norm{\phi^*} \leq B$ and $\norm{f(\cdot, \cdot, \phi)}_\infty \leq B$ for all $\phi \in \mathbb{R}^{d_\phi}$ satisfying $\norm{ \phi } \leq B$.

We study an online learning problem under these dynamics. We consider a learner who has knowledge of the dynamics $f$, but not the parameter $\phi^*$. In each episode $n = 1, \dots, N$, the learner executes a policy $\pi_n$ from the set of policies $\curly{\pi_0} \cup \Pi$, where $\pi_0$ is an initial (possibly randomized) exploration policy, while $\Pi$ is a class of deterministic controllers which take as input a point $x\in \mathbb{R}^{d_x}$ and return a control input $u\in\mathbb{R}^{d_u}$. Then, the learner observes a trajectory $(x_1, u_1), \dots (x_T, u_T)$ (generated by unrolling \eqref{expr:Dynamics} with $u_t \sim \pi_n(x_t)$); and incurs the cost $J(\pi_n, \phi^*)$, where \begin{align}\label{expr:Task}
    J(\pi, \phi) := \mathbb{E}_{\pi}^{\phi}\brac{\sum_{t = 1}^{T}{c_t(x_t, u_t)}}
\end{align}
for some cost functions $\{c_t\}_{t = 1, ..., T}$ which are fixed across episodes. The subscript on the expectation denotes that the policy $\pi$ is played, while the superscript denotes that the dynamics (\ref{expr:Dynamics}) are rolled out under $\phi$. The expectation is taken over the noise $w_t$ and the policy $\pi_n$. We suppose that the policy class $\Pi$ is parametric: $\Pi = \{\pi_\theta : \theta \in \mathbb{R}^{d_\theta}\}$. 

The learner's objective is to achieve a low sum of costs over episodes. A natural metric is therefore to minimize the regret, defined as \begin{align}
    \Regret(N) &:= \left(\sum_{n=1}^{N}{J(\pi_n, \phi^*)}\right) - N\min_{\pi \in \Pi}{J(\pi, \phi^*)}.
\end{align}
We will explore no-regret learners for this setting, for which $\Regret(N) / N \to 0$ as $N \to \infty$. 

\subsection{Certainty Equivalent Control}

Since the learner does not have access to the true dynamics $\phi^*$, it cannot directly solve a policy optimization problem under the system $f(x, u, \phi^*)$ \citep{fazel2018global} for the optimal controller. Instead, we leverage the principle of certainty equivalence. In particular, the learner uses the data collected from its interactions to pose an estimate $\hat\phi$ for the parameter $\phi^\star$. Using this estimate, the learner solves the policy optimization problem,\begin{align}\label{expr:PolicyOptimization}
    \theta^*(\hat \phi) \in \argmin_{\theta \in \mathbb{R}^{d_\theta}} J(\pi_\theta, \hat\phi).
\end{align}
The certainty equivalent policy may then be expressed as a function of the estimate $\hat \phi$ as \begin{align}
    \pi^*(\hat \phi) \triangleq \pi_{\theta^*(\hat\phi)}.
\end{align}
Under some additional assumptions (which we will lay out in the following section), we can measure the performance of the policy $\pi^*(\hat \phi)$ in terms of the quality of the estimate $\hat \phi$. Here, we define the prediction error of $\hat \phi$, on trajectories collected using a policy $\pi$ (which is not necessarily $\pi^*(\hat \phi)$), as\begin{align}
    \Err_{\pi}^{\phi^*}(\phi) \triangleq \mathbb{E}_{\pi}^{\phi^*}\brac{ \frac{1}{T} \sum_{t=1}^{T}{ \norm{ f(x_t, u_t, \phi) - x_{t+1} }^2 } }.
\end{align}

\subsection{Assumptions}

In order to relate the excess cost achieved by a certainty equivalent controller synthesized under a dynamics estimate $\phi$ to the quality of the estimate $\hat \phi$, we impose some smoothness assumptions on the dynamics and policy class.

\begin{assumption} \label{assume:SmoothDynamics}
    (Smooth dynamics). The dynamics are four times differentiable with respect to $u$ and $\phi$. Furthermore, for all $(x, u) \in \mathbb{R}^{d_x} \times \mathbb{R}^{d_\phi}$, and $i, j\in \{0, 1, 2, 3\}$ such that $1 \leq i + j \leq 4$, the derivatives of $f$ satisfy $\norm{D_{\phi}^{(i)}D_u^{(j)} f(x, u, \phi)}_{\mathsf{op}} \leq L_f$.
\end{assumption}

\begin{assumption} \label{assume:SmoothPolicyClass}
    (Smooth exploitation policy class). For all policies $\pi \in \Pi$ and $x\in \mathcal X$, the function $\pi_\theta(x)$ is four-times differentiable in $\theta$. Furthermore $\norm{D_\theta^{(i)}\pi_\theta (x)}_{\mathsf{op}} \leq L_{\Pi}$ for all $i = 1, ..., 4$, all $\theta \in \mathbb R^{d_\theta}$, and all $x\in \mathcal X$.
\end{assumption}

We additionally require that the costs are bounded for policies in the class $\curly{\pi_0} \cup \Pi$ and all dynamics parameters in a neighborhood of the true parameter. Intuitively, this allows our learning algorithm to occasionally play bad policies without incurring too much excess cost.
 
\begin{assumption} \label{assume:BoundedCosts}
    (Bounded costs). There exists $r_{\mathsf{cost}} > 0$ such that for all $\phi \in\calB(\phi^*, r_{\mathsf{cost}})$, and all $\pi \in \curly{\pi_0} \cup \Pi$, we have $\mathbb E_{\pi}^\phi \brac{\paren{\sum_{t=1}^T{c_t(x_t, u_t)} }^2} \leq T^2 L_{\mathsf{cost}}^2$. (Together with Jensen's inequality, this immediately implies that $\mathbb E_{\pi}^\phi \brac{ \sum_{t=1}^T{c_t(x_t, u_t)} } \leq T L_{\mathsf{cost}}$ for such $\phi$ and $\pi$.)
\end{assumption}
As the task is episodic, the above assumption holds if the stage costs are uniformly bounded for all $x \in \mathbb R^{d_x}$ and $u \in \mathbb R^{d_u}$. Alternatively, if the stage costs are smooth, the above condition holds if the states and inputs are bounded with high probability. This is satisfied for $\Pi$ by the smoothness of the dynamics (Assumption \ref{assume:SmoothDynamics}) and exploitation policy class (Assumption \ref{assume:SmoothPolicyClass}). A mild assumption that the initial policy $\pi_0$ plays bounded inputs suffices to guarantee the above condition also holds for $\pi_0$.

We additionally suppose that the certainty equivalent controller parameters, as a function of the estimated dynamics $\phi$, are locally smooth near the true dynamics $\phi^*$.

\begin{assumption} \label{assume:WagenmakerProp6} There exists some $r_{\mathsf{ce}} > 0$ such that for all $\phi \in\calB(\phi^*, r_{\mathsf{ce}})$,\begin{itemize}
    \item $\nabla_{\theta} J(\pi_\theta, \phi) \mid_{\theta = \theta^*(\phi)} = 0$,

    \item $\theta^*(\phi)$ is three times differentiable and $\left\| D_{\phi}^{(i)} \theta^*(\phi) \right\|_{\mathsf{op}} \leq L_{\mathsf{ce}}$ for some $L_{\mathsf{ce}} > 0$ and $i\in \{1, 2, 3\}$.
\end{itemize}
\end{assumption}
It is shown in Proposition 6 of \citet{wagenmaker2024optimal} that this condition holds if the minimizer of $J(\pi_{\theta}, \phi_\star)$ is unique, and $\nabla_{\theta}^2 J(\pi_{\theta}, \phi_\star) \succ 0$. 

In order to bound the parameter recovery error in terms of the prediction error, additional identifiability conditions are needed. \cite{ziemann2024sharp} show that a rather minimal Lojasiewicz condition \citep[cf.][]{roulet2017sharpness} relating the sharpness of an objective to its manifold of minimizers is sufficient for learning from dependent data. The following definition of a Lojasiewicz policy is taken from \cite{lee2024active} and extends the corresponding definition from \cite{ziemann2024sharp} to decision-making. In the setting of \cite{lee2024active}, the following definition of a Lojasiewicz policy bounds the estimation error $\norm{\phi - \phi^*}$ as a function of the prediction error $\Err_{\pi}^{\phi^*} (\phi)$ for all dynamics parameters $\phi$.

\begin{definition}\label{def:Lojasiewicz}
    For positive numbers $C$ and $\alpha$, say that a policy $\pi \in \Pi$ is $(C, \alpha)$-Lojasiewicz if \begin{align*}
        \norm{\hat\phi - \phi^* } \leq C \Err_{\pi}^{\phi^*} (\hat \phi)^\alpha \quad\text{for all $\hat \phi \in \mathbb R^{d_\phi}$.}
    \end{align*}
\end{definition}

Next, to ensure parameter recovery is possible for the learner, we make the following assumption regarding identifiability.

\begin{assumption} \label{assume:InitialLojasiewiczPolicy}
    (Initial Lojasiewicz policy). Fix some positive constant $C_{\mathsf{Loja}}$ and $\alpha \in (1/4, 1/2]$. The learner has access to a policy $\pi_0$ which is $(C_{\mathsf{Loja}}, \alpha)$-Lojasiewicz (here, we do not require that $\pi_0 \in \Pi$; furthermore, we allow $\pi_0$ to be randomized).
\end{assumption}
This is satisfied in linear systems with $\alpha = 1/2$ if the initial controller $\pi_0$ plays Gaussian noise as input, and both the controller noise and process noise have positive definite covariance matrices. More generally, Theorem 2 of \cite{musavi2024identification} implies that playing bounded i.i.d. random inputs suffices to satisfy this condition with $\alpha = 1/2$ in a broad class of analytic nonlinear systems (although the constant $C_{\mathsf{Loja}}$ may be very high).

While Assumption \ref{assume:InitialLojasiewiczPolicy} ensures that the learner can identify the true dynamics $\phi^*$ using only data collected under $\pi_0$, the rate of recovery may be slow under only the assumptions listed previously. In order to obtain polylogarithmic  regret bounds, we require the assumption that the optimal controller, defined by $\theta^* \triangleq \argmin_{\theta} J(\pi_{\theta}, \phi_\star)$, is persistently exciting. Persistence of excitation for a nonlinear dynamical system involves the positive definiteness of the matrix\begin{align*}
    \Sigma^\pi \triangleq \mathbb{E}^{\phi^*}_{\pi} \brac{\frac{1}{T} \sum_{t = 1}^{T}{Df(x_t, u_t, \phi^*)^\top Df(x_t, u_t, \phi^*)} } = D^{(2)}_{\phi}\paren{\Err_{\pi}^{\phi^*} (\phi)} \bigg|_{\phi = \phi^*} 
\end{align*} 
where $Df(x_t, u_t, \phi^*)$ denotes the Jacobian of $f$ with respect to $\phi$ evaluated at $\phi^*$. It can be shown that $\Sigma^\pi$ is a positive scalar multiple of the Fisher Information matrix (when the system evolves according to $\phi^*$ and $\pi$) and hence this condition is equivalent to requiring the positive definiteness of this Fisher Information matrix.

\begin{assumption} \label{assume:PersistencyOfExcitationOptimalController}
    (Persistency of excitation for the optimal controller). The optimal policy under the true dynamics $\phi^*$, denoted $\pi_{\theta^*} \triangleq \pi_{\theta^*(\phi^*)}$, is persistently exciting, i.e. for some $\mu> 0$,\begin{align*}
        \mathbb{E}_{\pi_{\theta^*}}^{\phi^*}\left[ \frac{1}{T}\sum_{t = 1}^{T}{Df(x_t, u_t, \phi^*)^\top Df(x_t, u_t, \phi^*)} \right] \succeq \mu I_{d_\phi}.
    \end{align*}
\end{assumption}
Note that the above assumption is not satisfied in LQR in general when both the $A^*$ and $B^*$ matrices are unknown. However, \cite{lee2024nonasymptotic} show that a sufficient condition for Assumption \ref{assume:PersistencyOfExcitationOptimalController} to hold in linear systems is that either 1) the $A^*$ matrix is known and the optimal controller $K^*$ has full row rank or 2) the $B^*$ matrix is known. 

Finally, we reiterate that in the event that \Cref{assume:PersistencyOfExcitationOptimalController} does not hold, we can obtain slower, but still sublinear regret rates under very general conditions. See Appendix A of the extended manuscript for details.

\section{Fast Learning}\label{s: fast learning}

Under Assumptions \ref{assume:SmoothDynamics}, \ref{assume:SmoothPolicyClass}, \ref{assume:BoundedCosts}, \ref{assume:WagenmakerProp6}, \ref{assume:InitialLojasiewiczPolicy}, and \ref{assume:PersistencyOfExcitationOptimalController}, we give an algorithm (\Cref{alg:LogRegret}) based on the aforementioned certainty equivalence principle which achieves polylogarithmic regret in our online nonlinear control setting.
% Algorithm \ref{alg:LogRegret} is motivated by a reduction to online strongly convex optimization.
Given an initial Lojasiewicz policy $\pi_0$, the exploitation policy class $\Pi$, the number of episodes $N$, the number of initial phase episodes $N_{\mathsf{phase\,1}}$ (where $0 \leq N_{\mathsf{phase\,1}} \leq N$), and a confidence radius $r_{\Phi}$, the algorithm proceeds in two phases. 

In the first phase, the learner collects a dataset $\{ (x_{n, t}, u_{n, t}, x_{n, t+1}) \}_{t = 1, ..., T}^{n = 1,..., N_{\mathsf{phase\,1}}}$ using $\pi_0$, and finds an confidence ball $\Phi$ with radius $r_{\Phi}$ such that $\phi^* \in \Phi$ with high probability. The confidence ball is centered at $\phi_0$, which is the solution to a nonlinear least squares problem, \begin{align}\label{expr:NonlinearLeastSquares}
    \phi_0 \in \argmin_{\phi \in \mathbb{R}^{d_\phi}, \norm{\phi}\leq B} \sum_{n = 1}^{N_{\mathsf{phase\,1}}}{ \sum_{t=1}^{T}{ \norm{ x_{n, t + 1} - f(x_{n, t}, u_{n, t}, \phi)}^2 } }.
\end{align}
With a sufficiently small $r_{\Phi}$, and conditioned on the event $\phi^* \in \Phi$, we show that policies synthesized using estimates that fall within this set enjoy a positive definite Fisher Information; equivalently, the prediction error $\Err_{\pi}^{\phi^*}(\phi)$ is strongly convex on $\Phi$ for all controllers $\pi$ synthesized with dynamics estimates $\hat \phi \in \Phi$. This motivates an online convex optimization procedure in the second phase.

In the second phase, the learner interacts with the system by playing policies synthesized using parameter estimates from $\Phi$. More specifically, the learner uses the certainty equivalent policy $\pi$ corresponding to its current estimate of $\phi^*$ to collect a single trajectory $\mathcal D = \curly{(x_t, u_t, x_{t + 1})}_{t = 1, \dots, T}$. The mean-squared-error of a dynamics estimate $\phi$ on the dataset $\mathcal D$ is \begin{align}\label{expr:EmpiricalPredictionError}
    l_{\mathcal D}(\phi) \triangleq \frac{1}{T}\sum_{t=1}^{T}{ \left\| f(x_t, u_t, \phi) - x_{t+1} \right\|^2 }
\end{align}
and the learner updates its estimate of $\phi^*$ using the gradient $ \nabla l_{\mathcal D}(\phi)$, and repeats this process.

\begin{algorithm}
\caption{Continuous Refinement}\label{alg:LogRegret}
\begin{algorithmic}[1]
\Require Exploration policy $\pi_0$, exploitation policy class $\Pi$, number of episodes $N$, number of initial phase episodes $N_{\mathsf{phase\,1}}$, confidence radius $r_{\Phi}$, hyperparameter $\mu$
\State Play $\pi_0$ for $N_{\mathsf{phase\,1}}$ episodes to collect the dataset $\mathcal{D}_0 := \{ (x^{(0)}_{n, t}, u^{(0)}_{n, t}, x^{(0)}_{n, t + 1}) \}_{t = 1, ..., T}^{n = 1, ..., N_{\mathsf{phase\,1}}}$ \Comment{First phase}
\State Set $\phi_0$ via least squares \eqref{expr:NonlinearLeastSquares} using $\mathcal{D}_0$
\State $\Phi \gets\calB(\phi_0, r_{\Phi})$
\For{$i = 0, 1, \dots, N - N_{\mathsf{phase\,1}} - 1$}
\Comment{Second phase}
    \State Set $\pi_{i + 1} \gets \pi^*(\phi_{i})$ 
    \State Play $\pi_{i + 1}$ to collect a single trajectory $\mathcal{D}_{i + 1} := \{ (x^{(i + 1)}_{t}, u^{(i + 1)}_{t}, x^{(i + 1)}_{t + 1}) \}_{t = 1, ..., T}$
    \State $\psi_{i + 1} \gets \phi_{i} - \frac{8}{\mu \cdot (i + 1)}\nabla l_{\mathcal{D}_{i + 1}}(\phi_{i})$, with $l_{\mathcal D}$ in \eqref{expr:EmpiricalPredictionError}
    \State $\phi_{i + 1} \gets \argmin_{\phi \in \Phi}\| \phi - \psi_{i + 1} \|$
\EndFor
\end{algorithmic}
\end{algorithm}

In general, the nonlinear least squares problem (\ref{expr:NonlinearLeastSquares}) and policy optimization problem (\ref{expr:PolicyOptimization}) may be computationally challenging. The focus of this work is to understand the statistical complexity of the problem rather than the computational complexity. However, it is worth noting that the online optimization procedure is computationally efficient and therefore the learner may efficiently execute the second phase of the dynamics estimation procedure online. Additionally, for particular systems \eqref{expr:Dynamics} and objectives \eqref{expr:Task}, the policy optimization problem \eqref{expr:PolicyOptimization} may be efficient. This is the case, for instance, if the optimal solution to the policy optimization problem can be achieved via feedback linearization \citep{charlet1989dynamic} by choosing the input to cancel out some portion of the dynamics. We consider such an example in \Cref{s:numerical}. 

% \subsection{Regret Bound}

Our main result bounds the regret incurred by Algorithm \ref{alg:LogRegret} in terms of $N$ and $N_{\mathsf{phase\,1}}$ under the aforementioned smoothness and identifiability conditions.

\begin{theorem}\label{thm:LogRegret}
    Consider applying Algorithm \ref{alg:LogRegret} to the system \eqref{expr:Dynamics} with initial policy $\pi_0$ satisfying Assumption \ref{assume:InitialLojasiewiczPolicy}, policy class $\Pi$ satisfying Assumption \ref{assume:SmoothPolicyClass}, number of iterations $N$, number of initial phase episodes $N_{\mathsf{phase\,1}}$ and confidence radius $r_{\Phi}$. Additionally suppose that the dynamics satisfy Assumption \ref{assume:SmoothDynamics} and that the costs satisfy Assumption \ref{assume:BoundedCosts}. Furthermore, suppose that the dynamics, objective, and policy class satisfy Assumption \ref{assume:WagenmakerProp6}. Finally, suppose that the true optimal controller $\pi_{\theta^*}$ and the hyperparameter $\mu$ satisfy Assumption \ref{assume:PersistencyOfExcitationOptimalController}. Then,\begin{align*}
        \mathbb{E}[\Regret(N)] &\leq \mathsf{poly}_\alpha (d_x, \sigma, L_f, L_{\mathsf{cost}}, \mu^{-1}) T \log N + TN_{\mathsf{phase\,1}}L_{\mathsf{cost}}
    \end{align*}
    as long as the following both hold:\begin{itemize}
        \item $r_{\Phi} \leq \mathsf{poly}\paren{r_{\mathsf{ce}}, r_{\mathsf{cost}}, d_x^{-1}, \sigma^{-1}, L_f^{-1}, L_{\Pi}^{-1}, L_{\mathsf{ce}}^{-1}, \mu} T^{-1/2}$,

        \item $N_{\mathsf{phase\,1}} \geq \mathsf{poly}_\alpha \paren{\log N, \log T, d_x, d_\phi, \sigma, C_{\mathsf{Loja}}, r_{\mathsf{ce}}^{-1}, r_{\mathsf{cost}}^{-1}, L_f, L_\Pi, L_{\mathsf{ce}}, \mu^{-1}, \log B } T^{1 / (2\alpha) - 1}$. Here, the subscript $\alpha$ indicates that the degree of the polynomial depends on $\alpha$.
    \end{itemize}
\end{theorem}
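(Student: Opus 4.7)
The plan is to split $\Regret(N)$ into the Phase 1 cost (at most $T N_{\mathsf{phase\,1}} L_{\mathsf{cost}}$ by Assumption \ref{assume:BoundedCosts}, yielding the second term of the bound) and the Phase 2 cost, which I will control by showing $\mathbb{E}\|\phi_i - \phi^*\|^2 = O(1/(\mu^2 (i+1)))$ and then converting this recovery rate to regret via a quadratic certainty-equivalence gap. All Phase 2 work will be carried out on the good event $\mathcal E = \{\phi^* \in \Phi\}$; its complement contributes at most $O(T L_{\mathsf{cost}})$ by Assumption \ref{assume:BoundedCosts}, provided the failure probability is driven to $O(1/N)$ by the Phase 1 bounds.

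\textbf{Phase 1 (confidence ball).} Standard nonlinear least squares bounds together with the smoothness of $f$ (Assumption \ref{assume:SmoothDynamics}) and the sub-Gaussian noise give $\Err_{\pi_0}^{\phi^*}(\phi_0) \lesssim \sigma^2 d_\phi \log N /(T N_{\mathsf{phase\,1}})$ with probability at least $1 - 1/N$. Applying Assumption \ref{assume:InitialLojasiewiczPolicy} to this yields
\[
\|\phi_0 - \phi^*\| \leq C_{\mathsf{Loja}}\paren{\frac{\sigma^2 d_\phi \log N}{T N_{\mathsf{phase\,1}}}}^\alpha,
\]
and the hypothesis $N_{\mathsf{phase\,1}} \gtrsim T^{1/(2\alpha)-1} \cdot \mathsf{poly}(\log N, \dots)$ is precisely what forces the right-hand side below $r_\Phi \lesssim T^{-1/2}$, ensuring $\phi^* \in \Phi$ and $\|\phi_i - \phi^*\| \leq 2 r_\Phi$ for every Phase 2 iterate on $\mathcal E$.

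\textbf{Phase 2 (online SGD rate).} On $\mathcal E$, the smoothness of $\phi \mapsto \theta^*(\phi)$ (Assumption \ref{assume:WagenmakerProp6}) and $\theta \mapsto \pi_\theta$ (Assumption \ref{assume:SmoothPolicyClass}) makes $\pi^*(\phi)$ Lipschitz in $\phi$, and the identity $\Sigma^\pi = D_\phi^{(2)} \Err_\pi^{\phi^*}(\phi)|_{\phi^*}$ combined with smoothness of $f$ allows me to transfer Assumption \ref{assume:PersistencyOfExcitationOptimalController} from $\pi_{\theta^*}$ to each $\pi_{i+1} = \pi^*(\phi_i)$ with constant $\mu/2$ by a perturbation argument, choosing $r_\Phi$ small enough. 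This shows $\phi \mapsto \mathbb{E}[l_{\mathcal D_{i+1}}(\phi) \mid \phi_i]$ is $(\mu/2)$-strongly convex on $\Phi$, and because trajectories are rolled out under $\phi^*$ the stochastic gradient $\nabla l_{\mathcal D_{i+1}}(\phi^*) = -\frac{2}{T}\sum_t Df(x_t, u_t, \phi^*)^\top w_t$ is a mean-zero martingale-difference sequence with variance $\lesssim \sigma^2 L_f^2 / T$. Feeding these into the textbook projected SGD analysis for strongly convex objectives with step size $8/(\mu(i+1))$ yields $\mathbb{E}[\|\phi_i - \phi^*\|^2 \mathbf{1}_{\mathcal E}] \lesssim 1/(\mu^2 (i+1))$.

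\textbf{Regret conversion and main obstacle.} A second-order Taylor expansion of $\phi \mapsto J(\pi^*(\phi), \phi^*)$ about $\phi^*$, using the first-order condition $\nabla_\theta J(\pi_\theta, \phi^*)|_{\theta = \theta^*(\phi^*)} = 0$ from Assumption \ref{assume:WagenmakerProp6} together with Assumptions \ref{assume:SmoothDynamics}--\ref{assume:BoundedCosts} for the Hessian bound, yields $J(\pi^*(\phi), \phi^*) - \min_{\pi \in \Pi} J(\pi, \phi^*) \lesssim T \|\phi - \phi^*\|^2$. Summing this over Phase 2 iterations with $\sum_i 1/(i+1) = O(\log N)$ produces the $\mathsf{poly}(\dots) T \log N$ term, and adding the Phase 1 contribution and the $O(T L_{\mathsf{cost}})$ failure-event contribution completes the bound. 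The main obstacle is the \emph{uniform} strong-convexity transfer in Phase 2: the data distribution at step $i+1$ depends on the current iterate $\phi_i$, so one must argue quantitatively that the Fisher information along \emph{every} in-ball policy remains $\succeq (\mu/2) I_{d_\phi}$. This perturbation bound is exactly what dictates the $r_\Phi \lesssim T^{-1/2}$ constraint, and is the crucial step that simultaneously invokes Assumptions \ref{assume:SmoothDynamics}, \ref{assume:SmoothPolicyClass}, \ref{assume:WagenmakerProp6}, and \ref{assume:PersistencyOfExcitationOptimalController}.
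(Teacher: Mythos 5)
Your proposal is correct and follows essentially the same route as the paper: the identical three-way regret decomposition (Phase 1, Phase 2 on the failure event, Phase 2 on the success event), the same Phase 1 argument combining a nonlinear least-squares rate with the Lojasiewicz condition to pin $\phi^*$ inside $\Phi$, the same perturbation argument transferring persistence of excitation from $\pi_{\theta^*}$ to all in-ball certainty-equivalent policies (the paper's Lemmas \ref{lem:ThetaLipschitzFisherInformation}--\ref{lem:LocalStrongConvexity}), the same projected-SGD recursion with step size $\Theta(1/(\mu i))$, and the same quadratic certainty-equivalence gap for the regret conversion. The "main obstacle" you flag is precisely the step the paper devotes its Lemmas \ref{lem:ThetaLipschitzFisherInformation} and \ref{lem:PhiLipschitzFisherInformation} to, so no substantive difference remains.
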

Theorem \ref{thm:LogRegret} states that if $r_{\Phi}$ is chosen small enough and the number of initial phase episodes $N_{\mathsf{phase\,1}}$ exceeds some burn-in which is polylogarithmic in $N$ and polynomial in all other relevant system parameters, then the regret of Algorithm \ref{alg:LogRegret} grows at most linearly with $\log N$ and $N_{\mathsf{phase\,1}}$. Plugging in specific choices for $r_{\Phi}$ and $N_{\mathsf{phase\,1}}$ yields a polylogarithmic regret bound for Algorithm \ref{alg:LogRegret}.

\begin{corollary} \label{cor: hyperparameter choice}
    Suppose we apply Algorithm \ref{alg:LogRegret} in the setting of Theorem \ref{thm:LogRegret} with the parameters:\begin{itemize}
        \item $r_{\Phi} = \mathsf{poly}\paren{r_{\mathsf{ce}}, r_{\mathsf{cost}}, d_x^{-1}, \sigma^{-1}, L_f^{-1}, L_{\Pi}^{-1}, L_{\mathsf{ce}}^{-1}, \mu} T^{-1/2}$,

        \item $N_{\mathsf{phase\,1}} = \mathsf{poly}_\alpha \paren{\log N, \log T, d_x, d_\phi, \sigma, C_{\mathsf{Loja}}, r_{\mathsf{ce}}^{-1}, r_{\mathsf{cost}}^{-1}, L_f, L_\Pi, L_{\mathsf{ce}}, \mu^{-1}, \log B } T^{1 / (2\alpha) - 1}$.
    \end{itemize}
    Then, Algorithm \ref{alg:LogRegret} achives regret depending polylogarithmically on the number of episodes $N$, i.e.,\begin{align*}
         \mathbb E [\Regret(N)] \leq \mathsf{poly}_\alpha \paren{\log N, \log T, d_x, d_\phi, \sigma, C_{\mathsf{Loja}}, r_{\mathsf{ce}}^{-1}, r_{\mathsf{cost}}^{-1}, L_f, L_\Pi, L_{\mathsf{ce}}, L_{\mathsf{cost}}, \mu^{-1}, \log B } T^{1 / (2 \alpha)}.
    \end{align*}
\end{corollary}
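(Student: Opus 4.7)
The plan is to derive Corollary~\ref{cor: hyperparameter choice} as a direct specialization of \Cref{thm:LogRegret} under the prescribed hyperparameter choices. The proof splits into three short bookkeeping steps, with no genuinely hard step.

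\emph{Step 1: verify the hypotheses of \Cref{thm:LogRegret}.} I first check that the prescribed $r_{\Phi}$ and $N_{\mathsf{phase\,1}}$ satisfy the two conditions listed in \Cref{thm:LogRegret}. This is immediate by construction: each hypothesis asks that the parameter be bounded (above for $r_{\Phi}$, below for $N_{\mathsf{phase\,1}}$) by a polynomial of a specific form in the problem constants, and the values in the corollary statement are \emph{defined} as exactly those polynomials. Hence the conclusion of \Cref{thm:LogRegret} applies verbatim.

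\emph{Step 2: substitute $N_{\mathsf{phase\,1}}$ into the additive term.} Plugging the prescribed $N_{\mathsf{phase\,1}}$ into the term $T N_{\mathsf{phase\,1}} L_{\mathsf{cost}}$ from \Cref{thm:LogRegret} yields $\mathsf{poly}_\alpha\bigl(\log N, \log T, d_x, d_\phi, \sigma, C_{\mathsf{Loja}}, r_{\mathsf{ce}}^{-1}, r_{\mathsf{cost}}^{-1}, L_f, L_\Pi, L_{\mathsf{ce}}, L_{\mathsf{cost}}, \mu^{-1}, \log B\bigr) \cdot T^{1/(2\alpha)}$, which has exactly the argument list appearing on the right-hand side of the corollary.

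\emph{Step 3: absorb the logarithmic term.} Assumption~\ref{assume:InitialLojasiewiczPolicy} restricts $\alpha$ to $(1/4, 1/2]$, so $1/(2\alpha) \geq 1$ and therefore $T \log N \leq T^{1/(2\alpha)} \log N$. Consequently the first term $\mathsf{poly}_\alpha(d_x, \sigma, L_f, L_{\mathsf{cost}}, \mu^{-1}) \cdot T \log N$ from \Cref{thm:LogRegret} is dominated by the expression obtained in Step~2 after enlarging the polynomial's argument list to contain all shared constants. Summing the two contributions then gives the claimed bound $\mathbb{E}[\Regret(N)] \leq \mathsf{poly}_\alpha(\cdots)\cdot T^{1/(2\alpha)}$.

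Because Corollary~\ref{cor: hyperparameter choice} is strictly a parameter instantiation of the main theorem, no step poses a genuine obstacle; the only care required is bookkeeping the polynomial factors so that every parameter appearing in either contribution from \Cref{thm:LogRegret} is included in the final $\mathsf{poly}_\alpha$ argument list, and verifying that the inequality $T \leq T^{1/(2\alpha)}$ correctly merges the two terms under the standing restriction $\alpha \leq 1/2$.
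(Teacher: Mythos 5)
Your proposal is correct and follows the same route the paper intends: the corollary is a direct instantiation of \Cref{thm:LogRegret}, with the $T N_{\mathsf{phase\,1}} L_{\mathsf{cost}}$ term becoming $\mathsf{poly}_\alpha(\cdots) T^{1/(2\alpha)}$ after substitution and the $T\log N$ term absorbed using $1/(2\alpha)\geq 1$ (valid since $\alpha\leq 1/2$ and $T\geq 1$) together with the fact that $\log N$ already appears in the final polynomial's argument list. The paper gives no separate proof beyond the remark that one "plugs in" these choices, so your bookkeeping is exactly the omitted argument.
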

The full proof of Theorem \ref{thm:LogRegret} may be found in \Cref{appendix: B}; we provide a brief sketch below.

\begin{proof}[Proof Sketch]
For $N_{\mathsf{phase\textnormal{ }1}}$ satisfying the given bound, the system identification results of \citet{ziemann2022learning, lee2024active} ensure that the confidence set $\Phi$ is constructed such that $\phi_\star\in\Phi$ with probability at least $1-1/N$. The regret is decomposed into three parts: that of the initial exploration phase, that of the second phase under the failure event where $\phi_\star \notin \Phi$, and that of the second phase under the success event, where $\phi_\star\in\Phi$. Using the bound on the episode costs, the regret incurred from the first phase is bounded by $TL_{\mathsf{cost}} N_{\mathsf{phase\,1}}$ and the regret incurred during the second phase under the failure event $\phi_\star\notin\Phi$ is bounded by $TL_{\mathsf{cost}}(N-N_{\mathsf{phase\,1}})\mathbb{P}\brac{\phi_\star \notin\Phi} \leq L_{\mathsf{cost}}$. The condition on the radius of the confidence set ensures that the prediction error is strongly convex when the learner plays a certainty equivalent controller synthesized using any system estimate $\phi\in\Phi$. This leads to an analysis similar to that of stochastic gradient descent \citep{robbins1951sgd} on a strongly convex objective to bound the regret incurred during the second phase. Summing the contributions of the three components leads to the regret bound in \Cref{thm:LogRegret}. 
\end{proof}

Before proceeding, we note that while the regret of \Cref{alg:LogRegret} depends \textit{polylogarithmically} on the number of episodes $N$, it depends \textit{polynomially (quasilinearly when $\alpha = 1/2$)} on the episode length $T$. Intuitively, one might expect a sublinear dependence on $T$ since increasing $T$ increases the number of interactions the learner has with the system. The polynomial dependence on $T$ arises because we consider an episodic setting without mixing assumptions within episodes. As a result, the sensitivity of the episode cost to parameter estimation scales quadratically in the episode horizon. Imposing mixing assumptions can reduce this to linear scaling by modifying the proof of Lemma D.1 of \citet{wagenmaker2024optimal}. 
% Indeed, under such mixing assumptions, growing length of the episode reduce the identification error (see Appendix B.2 of \citep{ziemann2022single}). 
Therefore, by imposing stronger assumptions which lead to mixing, such as stability of the initial and optimal policies, one can likely achieve a logarithmic dependence on $T$. We leave formalizing this to future work.

\section{Numerical Validation} \label{s:numerical}

\subsection{Toy Experiment}
\label{toy example}

We provide an simple example to illustrate the fast regret rates attained by Algorithm \ref{alg:LogRegret}. For more experiments, see \cref{cartpole experiment}. Consider the two-dimensional nonlinear system\begin{align}\label{expr: toy system}
    x_{t + 1} = x_t + 5\exp\paren{-\norm{x_t - \phi^*}^2}\frac{x_t - \phi^*}{\norm{x_t - \phi^*}} \! +\! u_t \!+\! w_t 
\end{align}
where $x_t, u_t, w_t, \phi^* \in \mathbb{R}^2$, and with $x_1 = \begin{bmatrix}
    0 & 0
\end{bmatrix}^\top$. The noise $w_t$ has a standard normal distribution. We choose the unknown parameter $\phi^* = \begin{bmatrix}
    0.25 & 0.25
\end{bmatrix}^\top$.

In this experiment, we use the horizon $T = 10$ and the number of episodes $N = 3000$. We will consider the quadratic cost functions\begin{align*}
    c_t(x, u) &= \norm{x}^2 \quad\text{for $t = 1, \dots, T$}.
\end{align*}
The policy class $\Pi$ consists of controllers parameterized by the dynamics estimate $\hat\phi$, with\begin{align}\label{expr: toy policy class}
        \pi_{\hat\phi}(x) =  - \paren{x + 5 \exp\paren{-\norm{x_t - \hat\phi}^2}\frac{x_t - \hat\phi}{\norm{x_t - \hat\phi}}}.
\end{align}
It can be shown that the dynamics \eqref{expr: toy system} and policy class \eqref{expr: toy policy class} satisfy Assumption \ref{assume:PersistencyOfExcitationOptimalController}. Our initial policy $\pi_0$ plays the controller $\pi_{\phi}$ corresponding to $\phi = \begin{bmatrix}
    0 & 0
\end{bmatrix}^\top$, which can be shown to satisfy Assumption \ref{assume:InitialLojasiewiczPolicy}. In place of choosing $N_{\mathsf{phase\,1}}$ or $r_{\Phi}$ according to Theorem 1, we heuristically set $N_{\mathsf{phase\,1}} = 100$ and $r_{\Phi} = 0.2$. We note that the dynamics are not uniformly bounded globally, however they are uniformly bounded with high probability.

Under this choice of cost function and policy class, the learner's objective is to keep the system near the origin. Figure \ref{fig:regret_plots} illustrates the performance (measured in terms of regret) of Algorithm \ref{alg:LogRegret} on the toy dynamical system. The first plot shows that, after the initial $N_{\mathsf{phase\,1}}$-episode initial phase, the excess cost incurred per round begins to decay quickly, leading to the regret growing polylogarithmically with $N$. The second plot is included to better illustrate the regret attained by Algorithm \ref{alg:LogRegret}; after the initial phase, the average regret appears to grow as a polynomial of the logarithm of the iteration. This toy example highlights the fast regret rates attained by Algorithm \ref{alg:LogRegret}.

\begin{figure*}[h!]
    \centering
    \includegraphics[width=0.6\linewidth]{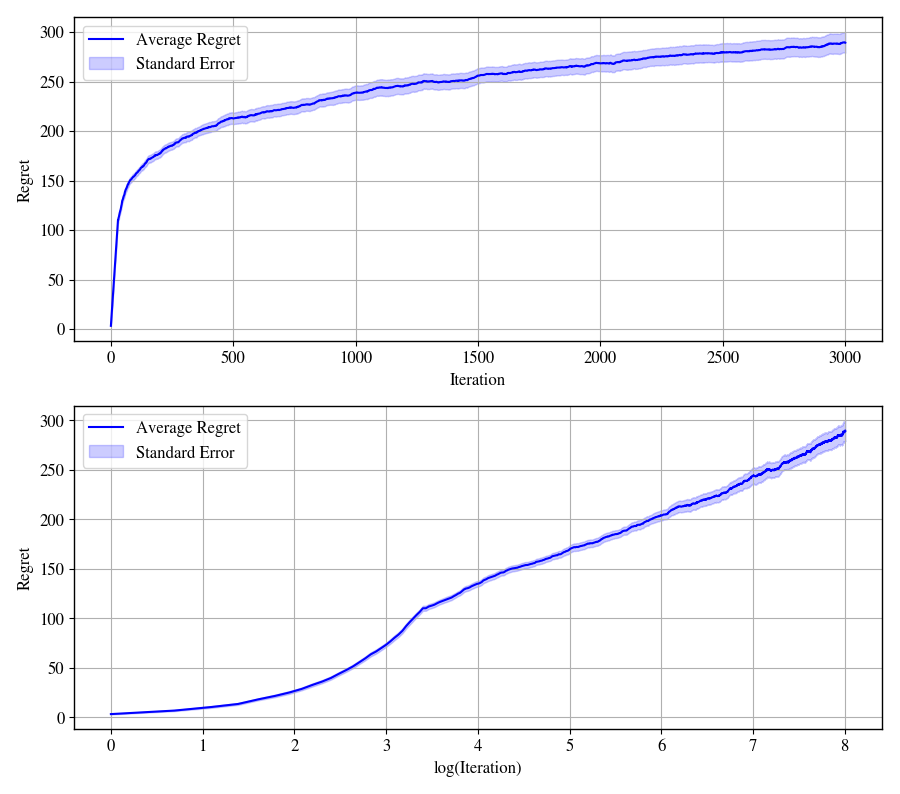}
    \caption{Average regret incurred by Algorithm \ref{alg:LogRegret} on the toy dynamical system \eqref{expr: toy system}, versus iterations and $\log(\text{iterations})$, respectively. The mean over 30 runs is shown, with the standard error shaded.}
    \label{fig:regret_plots}
\end{figure*}

\subsection{Cartpole Experiment}\label{cartpole experiment}

In this section, we complement our simple numerical example with an implementation of \Cref{alg:LogRegret} on a cartpole system defined by the dynamics:\begin{align}
(M + m) (\ddot{p} + b_p \dot{p}) + m l \cos(\theta) (\ddot{\theta} + b_\theta \dot{\theta}) &= m \ell \dot{\theta}^2 \sin(\theta) + u, \label{cartpole system 1} \\
m \cos(\theta) (\ddot{p} + b_p \dot{p}) + m l (\ddot{\theta} + b_\theta \dot{\theta}) &= m g \sin(\theta). \label{cartpole system 2}
\end{align}
Here, $p$ is the position of the cart, $\theta$ is the angle of the pole from the upright position, $u$ is the control force; the state vector is given by $x = \begin{bmatrix}
    p & \Dot p & \theta & \Dot \theta
\end{bmatrix}^\top$ and the input is given by $u$. Also, $M$ is the mass of the cart, $m$ is the mass of the pole, $l$ is the length of the pole, $g$ is the acceleration due to gravity, $b_x$ is the friction coefficient for the cart, and $b_\theta$ is the friction coefficient for the pole. We discretize the system using the Euler approach using a timestep of $dt = 0.2$. We also include additive zero mean Gaussian noise with covariance $0.05 I_4$. The unknown parameters are $\phi^* = \begin{bmatrix}M & m &l& b_x &b_\theta \end{bmatrix}^\top = \begin{bmatrix}1& 0.1& 1& 1& 1\end{bmatrix}^\top$. For every episode, the system starts from the upright position, given by the state $x_0 = \begin{bmatrix} 0&0&0&0\end{bmatrix}^\top$. The desired behavior is to keep the pole upright with the cart positioned at the origin for a time horizon of $T = 20$ timesteps. This behavior is described by the quadratic cost functions $c_t(x, u) = \norm{x}^2 + 0.1 u^2$.

Our exploitation policy class $\Pi$ is given by neural networks with layer sizes $(4, 64, 64, 64, 1)$ and ReLU activation functions. For computational reasons, in place of directly solving for the certainty equivalent policy for each parameter estimate $\phi_i$, we simultaneously update a dynamics estimate $\phi_i$ and train our control parameters $\theta_i$ as follows. At each iteration, we update our estimate of $\phi_i$ as in \Cref{alg:LogRegret} to get a new estimate $\phi_{i + 1}$; we then use the Adam optimizer \citep{Kingma2014AdamAM} to train a new set of control parameters $\theta_{i + 1}$ to minimize the cost functions using trajectories sampled with the dynamics $\phi_{i + 1}$ (in place of $\phi^*$), warm-starting the optimizer with the previous control parameters $\theta_i$. The initial exploration policy $\pi_0$ is given by bounded random noise scaled to match a predefined energy budget over the time horizon $T$; we choose a budget of $0.1T$. Finally, to illustrate the performance of our algorithm, we trained a "best-in-class" controller $\pi^*$ using trajectories sampled with the true dynamics $\phi^*$.

In this experiment, we use the horizon $T = 20$ and the number of episodes $N = 300$. Finally, we note that in place of choosing the number of initial phase episodes $N_{\mathsf{phase\,1}}$, the confidence radius $r_\Phi$, and the step sizes $\eta_i$ according to \Cref{cor: hyperparameter choice} and \Cref{alg:LogRegret}, we heuristically set $N_{\mathsf{phase\,1}} = 1$, $r_\Phi = 1$, and $\eta_i = 100 / (100 + i)$. The cost of each controller was evaluated by sampling $10000$ trajectories and using the average cost; for computational reasons, we chose to only evaluate the cost every $10$ iterations.

\begin{figure}
    \centering
    \includegraphics[width=0.6\linewidth]{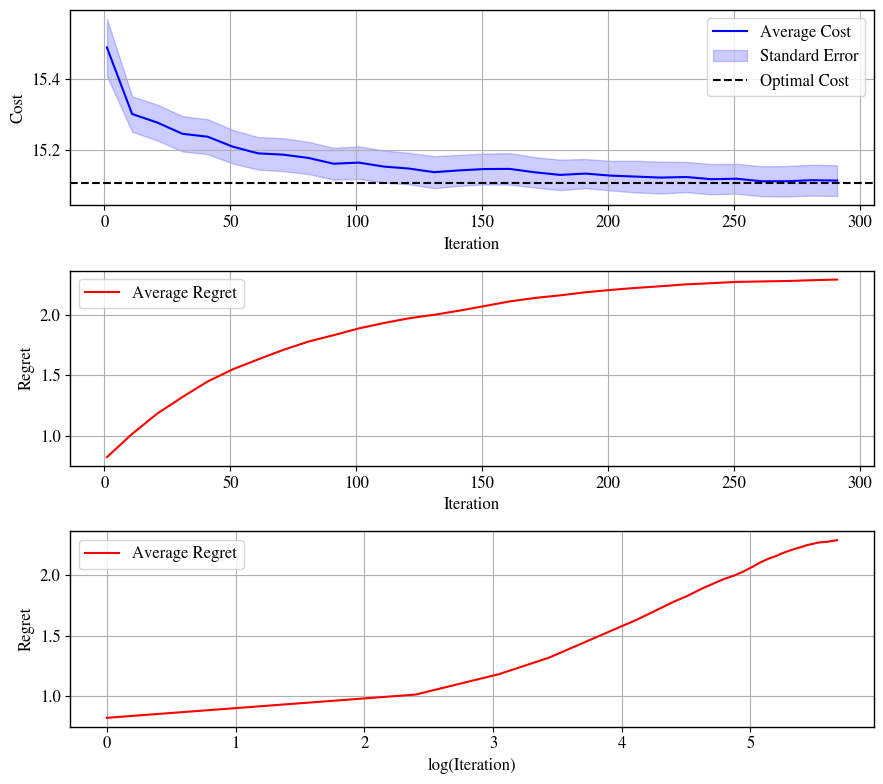}
    \caption{The first plot shows average cost incurred by \Cref{alg:LogRegret} on the cartpole system \eqref{cartpole system 1} - \eqref{cartpole system 2}, versus iterations. The mean over 30 runs is shown in blue, with standard error shaded. The cost of a "best-in-class" controller is shown with the dashed black line. The second and third plots show average regret versus iterations and the logarithm of iterations, respectively.}
    \label{fig: cartpole results}
\end{figure}

Figure \ref{fig: cartpole results} illustrates the cost incurred by Algorithm \ref{alg:LogRegret} on the cartpole system. The first plot shows that the cost of the controllers chosen by Algorithm \ref{alg:LogRegret} converges to the cost of $\pi^*$ quickly, which in turn leads to sublinear regret as demonstrated in the second plot. We note that unlike the example in \Cref{toy example}, the plot of regret versus logarithm of iteration does not show the same linear growth. This is due to two main reasons: first, we observed higher variance in estimating the costs of our cartpole controllers via sampling, leading to higher estimation error in both the average performance of \Cref{alg:LogRegret} as well as the optimal cost; second, due to the nonconvexity of optimizing neural network weights, our policy optimization steps were inexact, introducing additional discrepancies with our theory. However, the overall trend of fast convergence to the optimal control cost using a greedy algorithm is clear, and supports the behavior predicted by our theory. This cartpole experiment verifies that \Cref{alg:LogRegret} works on simple physical systems in practice.

\section{Conclusion}

We have introduced Algorithm \ref{alg:LogRegret} for online learning in a broad class of nonlinear dynamical systems. We have also proven a general sufficient condition for polylogarithmic regret under a natural curvature condition --- when the Fisher information matrix at the optimal policy is positive definite (detailed in our Assumption \ref{assume:PersistencyOfExcitationOptimalController}) --- and show that polylogarithmic regret is achieved by our Algorithm \ref{alg:LogRegret}. Finally, we have verified the performance of Algorithm \ref{alg:LogRegret} on a toy dynamical system and show that it achieves a fast regret rate in practice. Future work could extend these results to the single-trajectory setting. In particular, it could be interesting to extend the $\log^2 N$ regret rates of \cite{cassel2020logarithmic} and \cite{lee2024nonasymptotic} in the single-trajectory partially known linear setting to the setting with nonlinear dynamics. Another exciting avenue for future work is to design an online learning algorithm which deploys optimal experiment design techniques \citep{wagenmaker2024optimal} to optimally balance exploration and exploitation. Doing so may result in algorithms which automatically determine whether \Cref{assume:PersistencyOfExcitationOptimalController} is satisfied. Such an algorithm could achieve logarithmic regret if possible, and otherwise achieve $\sqrt{N}$ regret. Additionally, it may be possible to show improved dependence on the system-theoretic constants by using this approach.

\section*{Acknowledgements}

BL and NM are supported by NSF Award SLES-2331880, NSF CAREER award ECCS-2045834 and AFOSR Award FA9550-24-1-0102.
IZ is supported by a Swedish Research Council international postdoc grant. 

\bibliography{references.bib}

\appendix

\onecolumn
% \section{RESULTS FOR THE HARD-TO-IDENTIFY CASE} \label{s: hard to identify}
% \label{s:explorethencommitproof}

\section{Slow Learning}
\label{s: slow learning}
% \label{s:explorethencommitbound}

Here, we consider a more general setting where the assumption that the optimal policy has a positive definite Fisher Information (\Cref{assume:PersistencyOfExcitationOptimalController}) does not necessarily hold. In this setting, the approach of \Cref{s: fast learning} fails, because the prediction error is no longer necessarily locally strongly convex near the optimal solution. Consequently, the result of the previous section no longer applies to achieve polylogarithmic regret. Instead, we propose an algorithm for which the learner incurs regret scaling with the square root of the number of interactions. 

\subsection{Additional Assumptions}

 To present the result for the setting where \Cref{assume:PersistencyOfExcitationOptimalController} does not necessarily hold, we strengthen the condition on the initial policy from that in \Cref{assume:InitialLojasiewiczPolicy} to the following. 
\begin{assumption} \label{assume:GoodInitialLojasiewiczPolicy}
    (Initial Lojasiewicz policy). Fix some positive constant $C_{\mathsf{Loja}}$. The learner has access to a policy $\pi_0 \in \Pi$ which is $(C_{\mathsf{Loja}}, 1/2)$-Lojasiewicz.
\end{assumption}
In particular, we restrict attention to settings where the Lojasiewicz condition holds with parameter $1/2$. This means that estimation error grows quadratically with the parameter error. The assumption is made for ease of exposition. It would instead suffice to keep \Cref{assume:InitialLojasiewiczPolicy} and additionally assume that there exists a policy in the policy class which has a positive definite Fisher Information\footnote{Note that this is still substantially less restrictive than assuming that the optimal policy satisfies such a condition.}. Then by using the optimal experiment design procedure of \citet{lee2024active}, one could find the policy with a positive definite Fisher information, which satisfies the Lojasiewicz condition with parameter $\alpha=1/2$. 

\subsection{Algorithm and Regret Bound}

Under Assumptions \ref{assume:SmoothDynamics}, \ref{assume:SmoothPolicyClass}, \ref{assume:BoundedCosts}, \ref{assume:WagenmakerProp6}, and \ref{assume:GoodInitialLojasiewiczPolicy}, we give an algorithm with $\mathbb E[\Regret(N)] = O(\sqrt N \log N)$. This algorithm is based on an "explore then commit" procedure in which the algorithm explores for some number of episodes to collect an initial dataset to synthesize a control policy $\hat \pi$, then plays $\hat \pi$ for the remainder of the episodes. 

\begin{algorithm}
\caption{Explore-Then-Commit}\label{alg:SqrtRegret}
\begin{algorithmic}[1]
\Require Initial policy $\pi_0$, policy class $\Pi$, number of episodes $N$
\State Play $\pi_0$ for $\sqrt N$ episodes to collect the dataset $\calD \gets \curly{(x_t, u_t, x_{t+1}}_{t=1, n=1}^{T,\sqrt{N}}$.
\State \label{line: ls estimate ec} Set $\hat \phi$ as the least squares estimate \eqref{expr:NonlinearLeastSquares} using the dataset $\calD$. 
\State Set $\hat \pi \gets \pi^\star(\hat\phi)$.
\State Play $\hat \pi$ for the remaining $N-\sqrt{N}$ episodes. 
\end{algorithmic}
\end{algorithm}

\Cref{alg:SqrtRegret} is simpler than \Cref{alg:LogRegret}, and just includes a single step of parameter estimation rather than continuously refining. We characterize the regret incurred by Algorithm \ref{alg:SqrtRegret} as follows.

\begin{theorem}\label{thm:SqrtRegret}
    Consider applying \Cref{alg:SqrtRegret} to the system \eqref{expr:Dynamics} with initial policy $\pi_0$ satisfying \Cref{assume:GoodInitialLojasiewiczPolicy}, policy class $\Pi$ satisfying \Cref{assume:SmoothPolicyClass} for $N$ episodes. Suppose that the dynamics satisfy \Cref{assume:SmoothDynamics} and that the costs satisfy \Cref{assume:BoundedCosts}. Furthermore, suppose that the dynamics, objective, and policy class satisfy \Cref{assume:WagenmakerProp6}. Then there exists polynomial function $\mathsf{poly}_{\mathsf{burn}\textnormal{ }\mathsf{in}}$ such that
    \begin{align*}
        \mathbb{E}&\brac{\Regret(N)} \leq \mathsf{poly}\paren{L_{\pi^\star},  L_f, \log B,  L_{\Pi}, L_{\mathsf{cost}}, \sigma, \sigma^{-1}, T, d_x, d_{\phi}, \log N} \sqrt{N}.
    \end{align*}
    as long as $N \geq \mathsf{poly}_{\mathsf{burn\,in}}\paren{\sigma, d_x, d_{\phi}, \log B, L_f, \log (\sigma^{-1}), \log N, C_{\mathsf{Loja}}, {r_{\mathsf{ce}}}^{-1}, {r_{\mathsf{cost}}}^{-1}}$.
\end{theorem}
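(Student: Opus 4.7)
The plan is to decompose the regret into the contribution of the $\sqrt{N}$ exploration episodes and the $N-\sqrt{N}$ exploitation episodes, and to bound each separately. The exploration portion is immediate: by \Cref{assume:BoundedCosts}, each exploration episode incurs expected cost at most $T L_{\mathsf{cost}}$, while the comparator is non-negative, so this phase contributes at most $T L_{\mathsf{cost}}\sqrt{N}$ to the regret. All of the work therefore lies in controlling the exploitation regret, which equals $(N-\sqrt{N})\bigl(J(\hat\pi, \phi^*) - \min_{\pi\in\Pi}J(\pi,\phi^*)\bigr)$.

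The analysis of the exploitation regret proceeds in two stages. First, I would establish a high-probability parameter recovery bound of the form $\|\hat\phi - \phi^*\|^2 = \tilde{O}(1/\sqrt{N})$. This follows by applying the nonlinear least squares concentration results of \citet{ziemann2022learning, lee2024active} to the $\sqrt{N}$ i.i.d.\ episodes generated by $\pi_0$; under \Cref{assume:SmoothDynamics} and the constraint $\|\phi\|\le B$ in \eqref{expr:NonlinearLeastSquares}, these results show that the empirical risk minimizer $\hat\phi$ satisfies $\Err_{\pi_0}^{\phi^*}(\hat\phi) \le \tilde{O}(d_\phi \sigma^2/(T\sqrt{N}))$ with probability at least $1-1/N$. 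Combining this with the $(C_{\mathsf{Loja}}, 1/2)$-Lojasiewicz property of $\pi_0$ from \Cref{assume:GoodInitialLojasiewiczPolicy} (via \Cref{def:Lojasiewicz}) yields $\|\hat\phi - \phi^*\|^2 \le C_{\mathsf{Loja}}^2 \Err_{\pi_0}^{\phi^*}(\hat\phi) = \tilde{O}(1/\sqrt{N})$. The burn-in condition on $N$ is calibrated so that this bound simultaneously places $\hat\phi$ inside $\calB(\phi^*, r_{\mathsf{ce}})\cap \calB(\phi^*, r_{\mathsf{cost}})$ on the same event.

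Second, I would convert this parameter error into a per-episode cost gap via a second-order Taylor expansion of $\theta\mapsto J(\pi_\theta, \phi^*)$ about $\theta^* = \theta^*(\phi^*)$. Since $\nabla_\theta J(\pi_\theta, \phi^*)|_{\theta=\theta^*} = 0$ (\Cref{assume:WagenmakerProp6}) and the Hessian of $\theta\mapsto J(\pi_\theta, \phi^*)$ is bounded on a neighborhood of $\theta^*$ (a consequence of \Cref{assume:SmoothDynamics} and \Cref{assume:SmoothPolicyClass}), the expansion gives $J(\pi^*(\hat\phi), \phi^*) - J(\pi_{\theta^*}, \phi^*) \lesssim \|\theta^*(\hat\phi) - \theta^*\|^2$. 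Using that $\theta^*(\cdot)$ is $L_{\mathsf{ce}}$-Lipschitz on $\calB(\phi^*, r_{\mathsf{ce}})$, this is at most $L_{\mathsf{ce}}^2 \|\hat\phi - \phi^*\|^2 = \tilde{O}(1/\sqrt{N})$ on the success event. Summing the per-episode gap over the $N-\sqrt{N}$ exploitation episodes contributes $\tilde{O}(\sqrt{N})$ to the expected regret, and the failure event (of probability at most $1/N$) contributes at most $T L_{\mathsf{cost}} N \cdot (1/N)$ via \Cref{assume:BoundedCosts}, which is lower order.

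The main technical obstacle is the first stage: passing from the generic system identification bound to a clean $d_\phi \sigma^2/(T\sqrt{N})$ prediction-error rate for the constrained nonlinear least squares program \eqref{expr:NonlinearLeastSquares}. This requires a uniform concentration argument that tracks the covering complexity of the parameter ball of radius $B$ under the metric induced by $f$ (quantified by $L_f$), producing the $\log N$, $\log B$, and dimension factors absorbed into the $\tilde{O}$; the episodic structure with independent trajectories sidesteps the usual mixing difficulties and reduces this to an offline nonlinear regression problem with effective sample size $T\sqrt{N}$. Once this bound and the Lojasiewicz property are combined, the remaining cost-sensitivity argument is a standard certainty equivalence Taylor expansion under \Cref{assume:WagenmakerProp6}. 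Assembling the three contributions --- exploration, exploitation on the success event, and exploitation on the failure event --- then yields the claimed $\mathsf{poly}(\cdots)\sqrt{N}$ bound on $\mathbb{E}[\Regret(N)]$.
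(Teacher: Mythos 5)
Your proposal is correct and follows essentially the same route as the paper: the identical exploration/exploitation/failure-event decomposition, the same system-identification bound from \citet{lee2024active} combined with the $(C_{\mathsf{Loja}},1/2)$-Lojasiewicz property to get $\|\hat\phi-\phi^*\|^2 = \tilde{O}(1/\sqrt{N})$, and the same second-order certainty-equivalence Taylor expansion (the paper's modified Lemma 3.1 of \citet{lee2024active}) to convert parameter error into a per-episode cost gap of order $\|\hat\phi-\phi^*\|^2$, with the burn-in condition ensuring $\hat\phi$ lands in the neighborhood where that lemma applies. No gaps.
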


This result follows by noting that the exploration phase incurs a regret proportional to the number of exploration episodes, $\sqrt{N}$. For the exploitation phase, we leverage the smoothness of the cost functions, policy classes, and dynamics, to show that the excess cost incurred from a certainty equivalent policy scales quadratically with the estimation error $\norm{\hat\phi-\phi_\star}$. Meanwhile, the system identification bounds of \citet{lee2024active} demonstrate that the estimation error decays with $1/\sqrt{K}$ when $K$ trajectories are used to fit $\hat\phi$ (neglecting system constants and log terms). By setting $K=\sqrt{N}$, we find that $\norm{\hat\phi-\phi_\star}$ scales with $1/N^{1/4}$. Then the regret incurred in the exploration phase is $N \times \norm{\hat\phi-\phi_\star}^2 \lesssim N \times \frac{1}{\sqrt{N}} = \sqrt{N}$, where $\lesssim$ hides logarithmic quantities and problem constants. Summing the regrets in the exploration phase with the exploitation phase then also results in a bound scaling with $\sqrt{N}$. See \Cref{s:explorethencommitproof} for a rigorous proof.

\subsection{Proof of \Cref{thm:SqrtRegret}} \label{s:explorethencommitproof}

To prove Theorem \ref{thm:SqrtRegret} we first state two lemmas from \citet{lee2024active}). 

\begin{restatable}[Lemma A.1 of \citet{lee2024active}]{lemma}{consistency}
\label{lem: er bound}
Suppose Assumption~\ref{assume:SmoothDynamics} holds, and let $\delta\in(0,1/2]$. Let $\hat\phi$ be the least squares estimate from \eqref{expr:NonlinearLeastSquares} using $K$ episodes of data collected with a $(C_{\mathsf{Loja}}, \alpha)$ policy. There exists a polynomial function $\mathsf{poly}_{\alpha}$ which depends on $\alpha$ such that with probability at least $1-\delta$,
    \begin{align*}
        \norm{\hat\phi-\phi_\star}^2 \leq \left(\frac{512 \sigma^2}{TK } \paren{d_x + d_{\phi}\log\paren{\frac{4 B L_f TK}{\sigma \delta}}}\right)^{\alpha}.
    \end{align*}
    as long as $K \geq \tau_{\mathsf{err}}(\delta) \triangleq \mathsf{poly}_{\alpha}(\sigma,d_x, d_{\phi}, \log(B), L_f, C_{\mathsf{Loja}}, \log(K), \log\frac{1}{\delta})$.
\end{restatable}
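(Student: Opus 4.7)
The plan is to prove this concentration bound via a standard nonlinear least squares argument with a uniform covering step, followed by inversion through the Lojasiewicz condition. First I would invoke the basic inequality satisfied by the least squares estimator: since $\hat\phi$ minimizes $\sum_{n=1}^{K}\sum_{t=1}^{T} \norm{x_{n,t+1} - f(x_{n,t}, u_{n,t}, \phi)}^2$ over $\norm{\phi} \leq B$, plugging in $\phi = \phi^\star$ on the right and expanding via $x_{n,t+1} = f(x_{n,t}, u_{n,t}, \phi^\star) + w_{n,t}$ gives
\[
    \sum_{n,t} \norm{\Delta f_{n,t}(\hat\phi)}^2 \;\leq\; 2 \sum_{n,t} \langle w_{n,t}, \Delta f_{n,t}(\hat\phi) \rangle,
\]
where $\Delta f_{n,t}(\phi) := f(x_{n,t}, u_{n,t}, \phi) - f(x_{n,t}, u_{n,t}, \phi^\star)$. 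The left-hand side equals $TK$ times the empirical counterpart of $\Err_{\pi}^{\phi^\star}(\hat\phi)$ under the data-collection policy $\pi$.

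The substantive step is to control the cross-term uniformly in $\hat\phi$. Since each $w_{n,t}$ is $\mathcal{N}(0, \sigma^2 I)$ and conditionally independent of $(x_{n,t}, u_{n,t})$, the process $\sum_{n,t}\langle w_{n,t}, \Delta f_{n,t}(\phi)\rangle$ is a martingale in the natural filtration and, for each fixed $\phi$, sub-Gaussian with variance proxy $\sigma^2 \sum_{n,t}\norm{\Delta f_{n,t}(\phi)}^2$. I would promote this fixed-$\phi$ tail bound to a uniform one through an $\epsilon$-net over $\{\phi : \norm{\phi} \leq B\}$: Assumption~\ref{assume:SmoothDynamics} makes $f(x,u,\cdot)$ uniformly $L_f$-Lipschitz in $\phi$, so a net of cardinality $(BL_f/\epsilon)^{d_\phi}$ suffices and contributes the term $d_\phi \log(4BL_f TK/(\sigma\delta))$ after a union bound (with $\epsilon$ chosen polynomially small to absorb discretization error); the $d_x$ term in the logarithm comes from the ambient dimension of $w_{n,t}\in\mathbb{R}^{d_x}$. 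Solving the resulting self-normalized quadratic inequality in $\sqrt{\sum \norm{\Delta f_{n,t}(\hat\phi)}^2}$ yields
\[
    \frac{1}{TK}\sum_{n,t} \norm{\Delta f_{n,t}(\hat\phi)}^2 \;\leq\; \frac{c\sigma^2}{TK}\left(d_x + d_\phi \log\tfrac{4BL_f TK}{\sigma\delta}\right)
\]
with probability at least $1-\delta$.

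Finally I would invert this empirical prediction error bound into a parameter error bound via the Lojasiewicz hypothesis. Since $\pi$ is $(C_{\mathsf{Loja}}, \alpha)$-Lojasiewicz, Definition~\ref{def:Lojasiewicz} gives $\norm{\hat\phi - \phi^\star} \leq C_{\mathsf{Loja}} \cdot \Err_{\pi}^{\phi^\star}(\hat\phi)^{\alpha}$, where $\Err$ is the \emph{population} error. The burn-in $K \geq \tau_{\mathsf{err}}(\delta)$ is used to control the gap between the empirical prediction error computed along the observed trajectories and its population version uniformly over the parameter cover, which allows the display above to be fed into the Lojasiewicz inequality. After the appropriate algebraic manipulation and constant bookkeeping (absorbing $C_{\mathsf{Loja}}$ and the factor coming from squaring the Lojasiewicz bound into the leading constant), this yields the advertised bound of the form $\norm{\hat\phi - \phi^\star}^2 \leq (c\sigma^2(d_x + d_\phi \log \cdot)/(TK))^{\alpha}$.

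I expect the main obstacle to be the uniform concentration step. Within a single episode the state $x_{n,t}$ depends on the accumulated noise $w_{n,1},\ldots, w_{n,t-1}$, so although the cross-term is a martingale in the product filtration, a naive Azuma-type bound would scale with the worst-case envelope $\sup \norm{\Delta f}$ rather than with the self-normalized $\sqrt{\sum \norm{\Delta f}^2}$. Obtaining the correct self-normalized scaling requires a peeling/chaining argument over the parameter cover combined with a self-normalized martingale tail inequality. A secondary subtlety is quantifying $\tau_{\mathsf{err}}$ so that the empirical-to-population transfer is tight enough to preserve the claimed dependence on $K$ in the final bound; this is where $C_{\mathsf{Loja}}$ enters the burn-in polynomially.
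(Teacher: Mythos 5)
The paper does not prove this lemma at all: it is imported verbatim as Lemma~A.1 of \citet{lee2024active}, so there is no in-paper argument to compare against. Your reconstruction follows the standard route that the cited reference (and the Ziemann--Tu style analyses it builds on) actually uses --- the least-squares basic inequality, self-normalized martingale concentration made uniform over a cover of the parameter ball, and inversion of the resulting prediction-error bound through the Lojasiewicz condition --- and you correctly flag the genuine difficulty, namely that within-episode dependence forces a self-normalized rather than Azuma-type bound. One bookkeeping point is slightly off: the Lojasiewicz step gives $\norm{\hat\phi-\phi^*}^2 \leq C_{\mathsf{Loja}}^2\,\Err_{\pi}^{\phi^*}(\hat\phi)^{2\alpha}$, and neither the factor $C_{\mathsf{Loja}}^2$ nor the extra power $\alpha$ can be ``absorbed into the leading constant'' $512$; rather, one writes $C_{\mathsf{Loja}}^2\,\epsilon^{2\alpha} = (C_{\mathsf{Loja}}^2\epsilon^{\alpha})\,\epsilon^{\alpha} \leq \epsilon^{\alpha}$ once $\epsilon \leq C_{\mathsf{Loja}}^{-2/\alpha}$, which is exactly what the burn-in $K \geq \tau_{\mathsf{err}}(\delta)$ enforces (in addition to the empirical-to-population transfer you mention). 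With that correction your plan matches the intended proof.
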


\begin{lemma}[Modified from Lemma 3.1 of \citet{lee2024active}]
    \label{lem: cost decomposition}
    Suppose Assumptions~\ref{assume:SmoothDynamics}-\ref{assume:WagenmakerProp6} hold. 
    Then for $\hat\phi \in \calB(\phi^\star, \min\curly{r_{\mathsf{cost}}, r_{\mathsf{ce}}})$, 
    \begin{align}
            \label{eq: excess cost as parameter error}
            \calJ(\pi^\star(\hat\phi), \phi^\star)\! -\! \calJ(\pi^\star(\phi^\star), \phi^\star)\! \leq C_{\mathsf{cost}} \!\norm{\hat \phi \!-\! \phi^\star}^2,
        \end{align}
        where $C_{\mathsf{cost}} = \mathsf{poly}(L_{\pi^\star},  L_f, L_{\Pi}, L_{\mathsf{cost}}, \sigma^{-1}, d_x) T^2$.
\end{lemma}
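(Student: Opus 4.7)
The statement asks for a quadratic bound in $\|\hat\phi-\phi^\star\|$, which is one power sharper than what a generic change-of-measure (KL / Pinsker) argument between the two trajectory distributions would yield; the extra factor must be extracted from first-order optimality of $\theta^\star$ at $\phi^\star$. Accordingly, the plan is to do a second-order Taylor expansion of the scalar map $g(\hat\phi):=J(\pi^\star(\hat\phi),\phi^\star)$ around $\hat\phi=\phi^\star$.

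Let $h(\theta):=J(\pi_\theta,\phi^\star)$, so $g=h\circ\theta^\star$. The first bullet of Assumption~\ref{assume:WagenmakerProp6} gives $\nabla h(\theta^\star(\phi^\star))=0$, and hence by the chain rule $\nabla g(\phi^\star)=D\theta^\star(\phi^\star)^\top\nabla h(\theta^\star(\phi^\star))=0$. Taylor's theorem then yields
\begin{equation*}
J(\pi^\star(\hat\phi),\phi^\star)-J(\pi^\star(\phi^\star),\phi^\star) \;\le\; \tfrac{1}{2}\sup_{\tilde\phi\in[\phi^\star,\hat\phi]}\|\nabla^2 g(\tilde\phi)\|_{\op}\,\|\hat\phi-\phi^\star\|^2,
\end{equation*}
where every $\tilde\phi$ on the segment lies in $\calB(\phi^\star,\min\{r_{\mathsf{cost}},r_{\mathsf{ce}}\})$, so Assumptions~\ref{assume:BoundedCosts} and~\ref{assume:WagenmakerProp6} apply throughout. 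Differentiating the composition a second time gives $\nabla^2 g(\hat\phi)=D\theta^\star{}^\top\nabla^2 h(\theta^\star(\hat\phi))\,D\theta^\star+(D^2\theta^\star)\cdot\nabla h(\theta^\star(\hat\phi))$, and the second bullet of Assumption~\ref{assume:WagenmakerProp6} bounds both $\|D\theta^\star\|_{\op}$ and $\|D^2\theta^\star\|_{\op}$ by $L_{\mathsf{ce}}$. It remains to bound $\|\nabla h(\theta)\|$ and $\|\nabla^2 h(\theta)\|_{\op}$ uniformly for $\theta$ in the image of the ball.

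For this, I will use the change-of-measure representation
\begin{equation*}
h(\theta)\;=\;\mathbb{E}_{\tau\sim P_{\pi_{\theta^\star}}^{\phi^\star}}\!\left[\frac{dP_{\pi_\theta}^{\phi^\star}}{dP_{\pi_{\theta^\star}}^{\phi^\star}}(\tau)\sum_{t=1}^{T} c_t(x_t,\pi_\theta(x_t))\right],
\end{equation*}
which loads all $\theta$-dependence onto the Gaussian log-likelihood-ratio, whose first and second $\theta$-derivatives are explicit length-$T$ sums of terms involving derivatives of $f$ (Assumption~\ref{assume:SmoothDynamics}) and of $\pi_\theta$ (Assumption~\ref{assume:SmoothPolicyClass}) and which are polynomially bounded in $L_f, L_\Pi, \sigma^{-1}, d_x$. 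Cauchy-Schwarz against the cost second-moment bound $T^2 L_{\mathsf{cost}}^2$ from Assumption~\ref{assume:BoundedCosts} then gives $\|\nabla h\|$ and $\|\nabla^2 h\|_{\op}$ scaling like $T$ and $T^2$ respectively. Feeding these into the chain-rule formula for $\nabla^2 g$ above produces $\|\nabla^2 g\|_{\op}\le \mathsf{poly}(L_{\pi^\star},L_f,L_\Pi,L_{\mathsf{cost}},\sigma^{-1},d_x)\,T^2$, which matches the claimed form of $C_{\mathsf{cost}}$.

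The main technical obstacle is this uniform second-derivative bound on $h$: two layers of chain rule through a $T$-step dynamics must be controlled without exponential blowup in $T$ while relying only on the (non-pointwise) second-moment bound on the cost. The saving observation is that change of measure places all $\theta$-dependence on the smooth density ratio, so the possibly rough cost enters only through its bounded second moment, which is exactly what Assumption~\ref{assume:BoundedCosts} supplies. This mirrors the strategy behind Proposition~6 of \cite{wagenmaker2024optimal} and Lemma~3.1 of \cite{lee2024active}; the "Modified from" qualifier in the statement presumably tracks only small adjustments to constants and to the neighborhood radius $\min\{r_{\mathsf{cost}},r_{\mathsf{ce}}\}$ relative to those derivations.
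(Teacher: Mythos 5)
Your proposal takes essentially the same route as the paper: the paper's proof of this lemma is a one-line deferral to Lemma 3.1 of \citet{lee2024active} with the third-order Taylor expansion replaced by a second-order one, which is precisely the expansion of $\phi \mapsto J(\pi^\star(\phi),\phi^\star)$ you carry out, with the linear term annihilated by the first-order optimality condition of Assumption~\ref{assume:WagenmakerProp6} and the Hessian controlled via the chain rule together with the smoothness and bounded-cost assumptions. No substantive difference in approach.
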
 
\begin{proof}
    The proof follows as in the proof of Lemma 3.1 of \citet{lee2024active}; however, the third order Taylor expansion is replaced with a second order expansion. 
\end{proof}

Armed with these results, we proceed to prove \Cref{thm:SqrtRegret}. To begin, decompose the regret into the event conditioned on the success event of \Cref{lem: er bound} that holds with probability at least $1-1/N$, and the regret conditioned on the complement of that event. The condition to apply \Cref{lem: er bound} is satisfied due to the burn-in condition on $N$. Denote the success event $\calE_{\mathsf{success}}$. It holds that 
    \begin{align*}
        \mathbb{E}\brac{\Regret(N)} &= \mathbb{P}(\calE_{\mathsf{success}}) \mathbb{E}\brac{\Regret(N) \mid \calE_{\mathsf{success}}}  + (1- \mathbb P (\calE_{\mathsf{success}})) \mathbb{E}\brac{\Regret(N) \mid \calE_{\mathsf{success}}^c} \\
        &\leq \mathbb{E}\brac{\Regret(N) \mid \calE_{\mathsf{success}}} + (1/N) N T L_{\mathsf{cost}} \\
        &= \mathbb{E}\brac{\sum_{n=1}^{N}{(J(\pi_n, \phi^*)- \min_{\pi \in \Pi}{J(\pi, \phi^*))}} \mid \calE_{\mathsf{success}}} + T L_{\mathsf{cost}},
    \end{align*}
    where the inequality follows from the probability bound on the failure event, and the fact that the cost incurred during each episode is bounded by $T L_{\mathsf{cost}}$. 
    We may further decompose the cost into the cost incurred during the exploration phase (episodes $1$ to $\sqrt{N}$) and the cost incurred in the exploitation phase $\sqrt{N}$ to $N$. Leveraging the fact that the costs are bounded, we may therefore bound the cost incurred during the exploration phase by $\sqrt{N} T L_{\mathsf{cost}}$: 
    \begin{align*}
         \mathbb{E}\brac{\Regret(N)} &\leq  TL_{\mathsf{cost}} + \mathbb{E}\brac{\sum_{n=1}^{\sqrt{N}}{(J(\pi_0, \phi^*)- \min_{\pi \in \Pi}{J(\pi, \phi^*))}} \mid \calE_{\mathsf{success}}} \\ 
         &\qquad + \mathbb{E}\brac{\sum_{n=\sqrt{N}+1}^{N}{(J(\pi^\star(\hat\phi), \phi^*)- \min_{\pi \in \Pi}{J(\pi, \phi^*))}} \mid \calE_{\mathsf{success}}} \\
         &= TL_{\mathsf{cost}} + \sqrt{N} \mathbb{E}\brac{{(J(\pi_0, \phi^*)- \min_{\pi \in \Pi}{J(\pi, \phi^*))}} \mid \calE_{\mathsf{success}}} \\
         & \qquad + (N-\sqrt{N})\mathbb{E}\brac{{(J(\pi^\star(\hat\phi), \phi^*)- \min_{\pi \in \Pi}{J(\pi, \phi^*))}} \mid \calE_{\mathsf{success}}}  \\
         &\leq  TL_{\mathsf{cost}} + \sqrt{N} L_{\mathsf{cost}} + N \mathbb{E}\brac{{(J(\pi^\star(\hat\phi), \phi^*)- \min_{\pi \in \Pi}{J(\pi, \phi^*))}} \mid \calE_{\mathsf{success}}}. 
    \end{align*}
    where $\hat\phi$ is the parameter estimate from \Cref{line: ls estimate ec} of \Cref{alg:SqrtRegret}. To continue the proof, we need to bound the term:

\[
N \mathbb{E}\left[ J(\pi^\star(\hat\phi), \phi^*) - \min_{\pi \in \Pi} J(\pi, \phi^*) \mid \mathcal{E}_{\mathsf{success}} \right],
\]
where \(\mathcal{E}_{\mathsf{success}}\) denotes the event where the estimation error \(\|\hat\phi - \phi^*\|\) is small, as guaranteed by Lemma \ref{lem: er bound}. Under the event \(\mathcal{E}_{\mathsf{success}}\), from Lemma \ref{lem: er bound}, it holds that
\[
\|\hat\phi - \phi^*\|^2 \leq \frac{512 \sigma^2}{T \sqrt{N}} \left( d_x + d_{\phi} \log\left( \frac{4 B L_f T \sqrt{N}}{\sigma \delta} \right) \right) \leq  \frac{C_{\mathsf{identification}}}{\sqrt{N}},
\]
where \(C_{\mathsf{identification}}  := \mathsf{poly}(\sigma, d_x, d_{\phi}, \log(B), \log(L_f), \log(T), \log(N))\). From Lemma \ref{lem: cost decomposition}, it holds that for \(\hat\phi\) within a neighborhood of \(\phi^*\),
\begin{align*}
J(\pi^\star(\hat\phi), \phi^*) - J(\pi^\star(\phi^*), \phi^*) &\leq C_{\mathsf{cost}} \|\hat\phi - \phi^*\|^2.
\end{align*}
In particular, it suffices to use the bound on $\norm{\hat\phi-\phi^\star}$ along with the burn-in condition on $N$ to ensure that $\norm{\hat\phi-\phi^\star}$ is small enough to instantiate \Cref{lem: cost decomposition}. Substituting this bound into the overall regret bound, it holds that 
\begin{align*}
\mathbb{E}\left[ \Regret(N) \right] &\leq TL_{\mathsf{cost}}(1+ \sqrt{N}) + N \mathbb{E}\left[ J(\pi^\star(\hat\phi), \phi^*) - J(\pi^\star(\phi^*), \phi^*) \mid \mathcal{E}_{\mathsf{success}} \right] \\
&\leq TL_{\mathsf{cost}} (1+\sqrt{N}) + C_{\mathsf{identification}} C_{\mathsf{cost}} \sqrt{N}.
\end{align*}
Thus, we conclude:
\[
\mathbb{E}\left[ \Regret(N) \right] \leq  \mathsf{poly}(L_{\pi^\star},  L_f, \log(B),  L_{\Pi}, L_{\mathsf{cost}}, \sigma, \sigma^{-1}, T, d_x, d_{\phi}, \log(N)) \sqrt{N}.
\]

\section{Proofs for Fast Learning}\label{appendix: B}

\subsection{Results for Persistently Exciting Systems}

Before proving Theorem \ref{thm:LogRegret}, we state and prove some useful results for systems for which the optimal controller is persistently exciting. First, we prove a technical lemma which will help us bound the smoothness of the prediction error with respect to the dynamics and the controller parameters.

\begin{lemma}\label{lemma: adapted random walk}
    Let $w_1, \dots, w_T \sim \mathcal N(0, \Sigma)$ be independent random vectors. Let $A_1, \dots, A_T \in \mathbb R^{m \times n}$ be random matrices adapted to the filtration $\mathcal F_t = \sigma(w_s \mid s \leq t - 1)$ such that $\norm{A_k}_{\mathsf{op}} \leq U$ for all $k$ with probability $1$. Define the sum $S \triangleq \sum_{t=1}^{T}{A_t w_t} \in \mathbb R^m$. Then, \begin{align*}
        \mathbb E\brac{\norm{S}^2} \leq U^2T \tr(\Sigma) \quad\text{and} \quad \mathbb E\brac{\norm{S}} \leq U\sqrt{T \tr(\Sigma)}.
    \end{align*}
\end{lemma}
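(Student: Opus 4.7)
The plan is to exploit the fact that $\{A_t w_t\}_{t=1}^T$ forms a martingale difference sequence with respect to the filtration $\{\mathcal F_{t+1}\}$, since $A_t$ is $\mathcal F_t$-measurable (by adaptedness) while $w_t$ is independent of $\mathcal F_t$ with mean zero. This will reduce the second-moment computation to a sum of per-step variances by orthogonality of martingale differences.

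Concretely, I would first condition on $\mathcal F_t$: because $A_t$ is $\mathcal F_t$-measurable and $w_t$ is independent of $\mathcal F_t$,
\begin{equation*}
\mathbb E\!\left[\|A_t w_t\|^2 \,\big|\, \mathcal F_t\right] = \mathbb E\!\left[w_t^\top A_t^\top A_t w_t \,\big|\, \mathcal F_t\right] = \tr\!\left(A_t^\top A_t \,\Sigma\right).
\end{equation*}
Using the operator-norm bound $A_t^\top A_t \preceq U^2 I_n$ together with the fact that $\Sigma \succeq 0$, the cyclic trace inequality gives $\tr(A_t^\top A_t \,\Sigma) \leq U^2 \tr(\Sigma)$ almost surely, so $\mathbb E[\|A_t w_t\|^2] \leq U^2 \tr(\Sigma)$.

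Next I would expand $\|S\|^2 = \sum_{t} \|A_t w_t\|^2 + 2\sum_{s<t} \langle A_s w_s, A_t w_t\rangle$ and show that every cross term has zero expectation. For $s<t$, conditioning on $\mathcal F_t$ leaves $A_s w_s$ and $A_t$ measurable and uses independence of $w_t$ to obtain $\mathbb E[\langle A_s w_s, A_t w_t\rangle \mid \mathcal F_t] = \langle A_s w_s, A_t \cdot 0\rangle = 0$. Summing the diagonal contributions then yields the first claim $\mathbb E[\|S\|^2] \leq U^2 T \tr(\Sigma)$. The second claim follows immediately from Jensen's inequality applied to the concave function $\sqrt{\cdot}$, namely $\mathbb E[\|S\|] \leq \sqrt{\mathbb E[\|S\|^2]} \leq U\sqrt{T\tr(\Sigma)}$.

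There is no real obstacle here; the only subtlety to be careful about is the measurability bookkeeping (the adaptedness convention has $A_t \in \mathcal F_t$ with $\mathcal F_t = \sigma(w_{1:t-1})$, so $w_t$ is independent of both $A_s$ for $s\leq t$ and $w_s$ for $s<t$), which is what makes the cross terms vanish and allows the conditional-trace computation to go through cleanly.
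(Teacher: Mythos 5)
Your proof is correct and follows essentially the same route as the paper: expand $\|S\|^2$, kill the cross terms by conditioning on the filtration and using that $w_t$ is mean-zero and independent of $\mathcal F_t$, bound the diagonal terms, and apply Jensen for the first-moment bound. The only cosmetic difference is that you bound the diagonal terms via the conditional identity $\mathbb E[\|A_t w_t\|^2 \mid \mathcal F_t] = \tr(A_t^\top A_t \Sigma) \le U^2\tr(\Sigma)$ rather than the pointwise bound $\|A_t w_t\|^2 \le U^2\|w_t\|^2$, which yields the same constant.
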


\begin{proof}
    First, we aim to bound $\mathbb E\brac{\norm{S}^2}$. Expanding it out,\begin{align}\label{expected S2}
        \mathbb E\brac{\norm{S}^2} &= \mathbb E\brac{ \sum_{t=1}^{T}{\sum_{s = 1}^{T}{ w_t^\top A_t^\top A_s w_s }}  } = \mathbb E\brac{\sum_{t=1}^{T}{\norm{A_tw_t}^2}} + \mathbb E\brac{ \sum_{(s, t) \in [T]^2, s \neq t}{w_t^\top A_t^\top A_s w_s} }
    \end{align}
    By our assumptions, the second term on the right is zero, since\begin{align*}
        \mathbb E\brac{ \sum_{(s, t) \in [T]^2, s \neq t}{w_t^\top A_t^\top A_s w_s} } &= 2 \sum_{(s, t) \in [T]^2, s < t}{\mathbb E\brac{w_t^\top A_t^\top A_s w_s} } \tag{linearity} \\
        &= 2 \sum_{(s, t) \in [T]^2, s < t}{\mathbb E\brac{w_t^\top} \mathbb E\brac{A_t^\top A_s w_s} } \tag{$w_t$ is independent of $\mathcal F_{t - 1}$, and $A_t, A_s, w_s$ are $\mathcal F_{t - 1}$-measurable}\\
        &= 0 \tag{$w_t$ has zero mean}
    \end{align*}
    The remaining term from \eqref{expected S2} can be bounded as \begin{align}
        \mathbb E\brac{\sum_{t=1}^{T}{\norm{A_tw_t}^2}} \leq U^2 T \mathbb E\brac{\sum_{t=1}^{T}{\norm{w_t^2}} } = U^2 T \tr(\Sigma)
    \end{align}
    The desired result follows because $\mathbb E\brac{\norm{S}} \leq \sqrt{\mathbb E\brac{\norm{S}}^2}$ (an application of Jensen's inequality).
\end{proof}

Next, the following smoothness results show that if the optimal controller $\pi_{\theta^*(\phi^*)}$ is persistently exciting (Assumption \ref{assume:PersistencyOfExcitationOptimalController}), then all controllers $\pi_\theta$, where $\theta$ is in an open ball centered around $\theta^*$, are also persistently exciting. 

\begin{lemma}\label{lem:ThetaLipschitzFisherInformation}
    Suppose that Assumptions \ref{assume:SmoothDynamics} and \ref{assume:SmoothPolicyClass} hold. Then for all $\theta \in \mathbb{R}^{d_\theta}$,\begin{align*}
        \norm{ D_\theta \paren{D_{\phi}^{(2)} \Err_{\pi_\theta}^{\phi^*}(\phi^*)} \bigg\rvert_{\theta = \theta} }_{\mathsf{op}} \leq 2L_f^2L_{\Pi} + \frac{L_f^3 L_\Pi \sqrt{T d_x}}{\sigma}.
    \end{align*}
\end{lemma}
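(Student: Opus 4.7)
The plan is to represent $M(\theta) := D_\phi^{(2)}\Err_{\pi_\theta}^{\phi^*}(\phi^*)$ as an expectation over trajectories drawn under $(\pi_\theta,\phi^*)$ and then differentiate it in $\theta$ via a \emph{score-function} (likelihood-ratio) decomposition rather than a pathwise derivative. A pathwise derivative would require controlling the sensitivity recursion $\partial_\theta x_{t+1} = (D_x f + D_u f\cdot D_x\pi_\theta)\,\partial_\theta x_t + D_u f\,D_\theta\pi_\theta$, hence $D_x f$ and $D_x\pi_\theta$, which are \emph{not} supplied by Assumptions \ref{assume:SmoothDynamics}--\ref{assume:SmoothPolicyClass}. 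Differentiating through the likelihood instead keeps every Jacobian one encounters inside $\{D_u f,\,D_\theta\pi_\theta\}$, both uniformly bounded, and naturally produces the two summands of the stated bound: a deterministic piece of size $2L_f^2 L_\Pi$ and a stochastic piece of size $L_f^3 L_\Pi\sqrt{Td_x}/\sigma$, whose $1/\sigma$ factor is the signature of differentiating through a Gaussian.

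The first step is to invoke the identity $M(\theta) = \tfrac{1}{T}\mathbb{E}_{\pi_\theta}^{\phi^*}\!\left[\sum_{t=1}^{T} J(x_t,u_t)\right]$ with $J(x,u) := Df(x,u,\phi^*)^\top Df(x,u,\phi^*)$; this follows by expanding $\|f(x_t,u_t,\phi)-x_{t+1}\|^2$ twice in $\phi$ and noting that the remaining term containing $D_\phi^{(2)} f$ is paired with $w_t$ and vanishes in expectation by adaptedness. Next, I would write the expectation as an integral against the trajectory density $p_\theta(x_2,\ldots,x_T\mid x_1) = \prod_t \mathcal{N}(x_{t+1};\,f(x_t,\pi_\theta(x_t),\phi^*),\,\sigma^2 I_{d_x})$ and apply $D_\theta\!\int h_\theta\,p_\theta = \int (D_\theta h_\theta)\,p_\theta + \int h_\theta\otimes s_\theta\,p_\theta$ with score $s_\theta := D_\theta\log p_\theta$, giving
\begin{equation*}
    D_\theta M(\theta) = \underbrace{\tfrac{1}{T}\mathbb{E}\!\left[\sum_t D_u J(x_t,\pi_\theta(x_t))\,D_\theta\pi_\theta(x_t)\right]}_{(\mathrm{I})} + \underbrace{\tfrac{1}{T}\mathbb{E}\!\left[\Big(\sum_t J(x_t,u_t)\Big)\otimes s_\theta\right]}_{(\mathrm{II})}.
\end{equation*}

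Term $(\mathrm{I})$ is immediate: the product rule yields $\|D_u J\|_{\mathsf{op}}\le 2L_f^2$, Assumption \ref{assume:SmoothPolicyClass} gives $\|D_\theta\pi_\theta\|_{\mathsf{op}}\le L_\Pi$, so each summand has operator norm at most $2L_f^2 L_\Pi$ and the average over $t$ preserves the bound. For $(\mathrm{II})$, differentiating the Gaussian log-density explicitly produces $s_\theta = \tfrac{1}{\sigma^2}\sum_t w_t^\top B_t$ with $B_t := D_u f(x_t,u_t,\phi^*)\,D_\theta\pi_\theta(x_t)$, where $B_t$ is $\mathcal{F}_{t-1}$-measurable and $\|B_t\|_{\mathsf{op}}\le L_f L_\Pi$. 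Applying \Cref{lemma: adapted random walk} with $U = L_f L_\Pi$ and $\Sigma = \sigma^2 I_{d_x}$ yields $\mathbb{E}\|s_\theta\|\le L_f L_\Pi\sqrt{Td_x}/\sigma$; combining with $\|J(x_t,u_t)\|_{\mathsf{op}}\le L_f^2$ and the outer-product tensor-norm bound $\|A\otimes v\|_{\mathsf{op}}\le \|A\|_{\mathsf{op}}\|v\|$ gives $\|(\mathrm{II})\|_{\mathsf{op}}\le L_f^3 L_\Pi\sqrt{Td_x}/\sigma$. Summing the two contributions delivers the inequality.

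The conceptually hard part is the decomposition itself: a naive pathwise chain rule produces constants not controlled by the given assumptions, and one must recognize that the likelihood-ratio route keeps every appearing derivative inside $\{D_\theta\pi_\theta,\,D_u f,\,Df,\,D_u J\}$, paying only one factor of the Gaussian score which \Cref{lemma: adapted random walk} converts into $\sqrt{Td_x}/\sigma$. The remaining technical steps---justifying the dominating integrability needed to exchange $D_\theta$ and $\mathbb{E}$, and tracking the 3-tensor conventions in the outer product $J\otimes s_\theta$---are routine given the uniform bounds of Assumption \ref{assume:SmoothDynamics} and the Gaussianity of $w_t$.
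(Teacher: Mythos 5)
Your proposal is correct and follows essentially the same route as the paper's proof: both differentiate the trajectory-density representation of $D_\phi^{(2)}\Err_{\pi_\theta}^{\phi^*}(\phi^*)$ via the product rule, bound the integrand-derivative term by $2L_f^2L_\Pi$, and handle the density-derivative term with the log-derivative (score) trick combined with \Cref{lemma: adapted random walk} applied with $U = L_f L_\Pi$. Your added remarks on why a pathwise derivative would fail and on the vanishing of the $D_\phi^{(2)}f$ cross term at $\phi=\phi^*$ are correct elaborations of steps the paper leaves implicit.
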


\begin{proof} Since the dynamics are rolled out under $\pi_\theta$, we have:\begin{dmath}\label{expr:ThetaDerivativeFisherInfo}
   D_\theta \paren{D_{\phi}^{(2)} \Err_{\pi_\theta}^{\phi^*}(\phi^*)} \bigg\rvert_{\theta = \theta}
   = D_\theta \paren{\mathbb E^{\phi^*}_{\pi_\theta} \brac{\frac{1}{T} \sum_{t = 1}^{T}{Df(x_t, \pi_\theta(x_t), \phi^*)^\top Df(x_t, \pi_\theta(x_t), \phi^*)} }  } \bigg\rvert_{\theta = \theta} \\
   = D_\theta \paren{ \int {\paren{ \frac{1}{T}\sum_{t = 1}^{T}{Df(x_t, \pi_\theta(x_t), \phi^*)^\top Df(x_t, \pi_\theta(x_t), \phi^*)} }\: p_{\pi_\theta}^{\phi^*}(x_{1:T+1})d(x_{1:T+1)}} } \bigg\rvert_{\theta = \theta}
\end{dmath}
Here $p_{\pi_\theta}^{\phi^*}(x_{1:T+1})$ is the density of the trajectory $x_{1:T+1}$ under dynamics $\phi$* and controller $\pi_\theta$:\begin{align}\label{expr:TrajectoryDensity}
    p_{\pi_\theta}^{\phi^*}(x_{1:T+1}) = \paren{\frac{1}{\sigma \sqrt {2\pi}}}^T \exp \paren{-\frac{1}{2\sigma^2}\sum_{t=1}^{T}{\| w_t \|^2}}
\end{align}
where $w_t \triangleq x_{t + 1} - f(x_t, \pi_{\theta}(x_t), \phi^*)$ is the noise. By the dominated convergence theorem, followed by the product rule:\begin{dmath}\label{expr:ThetaDerivativeFI}
    {\eqref{expr:ThetaDerivativeFisherInfo} = \int {\paren {D_\theta \paren{ \paren{\frac{1}{T} \sum_{t = 1}^{T}{Df(x_t, \pi_\theta(x_t), \phi^*)^\top Df(x_t, \pi_\theta(x_t), \phi^*)} } p_{\pi_\theta}^{\phi^*}(x_{1:T+1})} \bigg\rvert_{\theta = \theta} } \: d(x_{1:T+1)}}}  
    = \int { \paren{D_\theta \paren{\frac{1}{T}\sum_{t = 1}^{T}{Df(x_t, \pi_\theta(x_t), \phi^*)^\top Df(x_t, \pi_\theta(x_t), \phi^*)} } \bigg\rvert_{\theta = \theta} } \: p_{\pi_\theta}^{\phi^*}(x_{1:T+1}) d(x_{1:T+1})}  + \int {\paren{\frac{1}{T}\sum_{t = 1}^{T}{Df(x_t, \pi_\theta(x_t), \phi^*)^\top Df(x_t, \pi_\theta(x_t), \phi^*)}} \paren{D_\theta \paren{p_{\pi_\theta}^{\phi^*}(x_{1:T+1})I } \bigg\rvert_{\theta = \theta} } \: d(x_{1:T+1})}
\end{dmath}
We proceed by bounding the operator norms of the two integrals in equation \eqref{expr:ThetaDerivativeFI}. To bound the operator norm of the first integral, we apply Jensen's inequality followed by Assumptions \ref{assume:SmoothDynamics} and \ref{assume:SmoothPolicyClass}:\begin{dmath}
     \norm {\int { \paren{D_\theta \paren{\frac{1}{T}\sum_{t = 1}^{T}{Df(x_t, \pi_\theta(x_t), \phi^*)^\top Df(x_t, \pi_\theta(x_t), \phi^*)} } \bigg\rvert_{\theta = \theta} } \: p_{\pi_\theta}^{\phi^*}(x_{1:T+1}) d(x_{1:T+1})}}_{\mathsf{op}}\\
     \leq \int {\norm { D_\theta \paren{\frac{1}{T}\sum_{t = 1}^{T}{Df(x_t, \pi_\theta(x_t), \phi^*)^\top Df(x_t, \pi_\theta(x_t), \phi^*)} } \bigg\rvert_{\theta = \theta} } _{\mathsf{op}} \: p_{\pi_\theta}^{\phi^*}(x_{1:T+1}) d(x_{1:T+1})}\\
     \leq 2 L_f^2 L_{\Pi}
\end{dmath}
To bound the norm of the second integral, we use Jensen's inequality and the log-derivative trick:\begin{align}
    &\norm{\int {\paren{\frac{1}{T}\sum_{t = 1}^{T}{Df(x_t, \pi_\theta(x_t), \phi^*)^\top Df(x_t, \pi_\theta(x_t), \phi^*)}} \paren{D_\theta \paren{p_{\pi_\theta}^{\phi^*}(x_{1:T+1})I } \bigg\rvert_{\theta = \theta} } \: d(x_{1:T+1})}}_{\mathsf{op}}\\
     &\leq \int{ \norm{\frac{1}{T}\sum_{t=1}^{T}{Df(x_t, \pi_\theta(x_t), \phi^*)^\top Df(x_t, \pi_\theta(x_t), \phi^*)}}_{\mathsf{op}} \norm{\nabla_\theta\paren{p_{\pi_\theta}^{\phi^*}(x_{1:T+1})} \bigg\rvert_{\theta = \theta} } \: d(x_{1:T+1})}\\
     &\leq L_f^2 \int{ \norm{\nabla_\theta\paren{\log p_{\pi_\theta}^{\phi^*}(x_{1:T+1})} \bigg\rvert_{\theta = \theta}} \: p_{\pi_\theta}^{\phi^*}(x_{1:T+1}) d(x_{1:T+1})} \label{expr22}
\end{align}
Then, we substitute (\ref{expr:TrajectoryDensity}) for the density $p_{\pi_\theta}^{\phi^*}(x_{1:T+1})$ and simplify:\begin{align}
   &\eqref{expr22} \notag\\
   &= L_f^2 \int{ \norm{\nabla_\theta\paren{\log\paren{ \paren{\frac{1}{\sigma \sqrt {2\pi}}}^T \exp \paren{-\frac{1}{2\sigma^2}\sum_{t=1}^{T}{\norm{w_t}^2}} }} \bigg\rvert_{\theta = \theta}} \: p_{\pi_\theta}^{\phi^*}(x_{1:T+1}) d(x_{1:T+1})} \\
    &= \frac{L_f^2}{\sigma^2} \int{ \norm{ \sum_{t=1}^{T}{w_t^\top \paren{ D_u f(x_t, \pi_\theta(x_t), \phi^*) } \paren{D_\theta \pi_\theta(x_t) \rvert_{\theta = \theta} } } } \: p_{\pi_\theta}^{\phi^*}(x_{1:T+1}) d(x_{1:T+1})} \label{expr: after density sub}
\end{align}
By Assumptions \ref{assume:SmoothDynamics} and \ref{assume:SmoothPolicyClass}, we can use \Cref{lemma: adapted random walk} (with $U = L_f L_\Pi$) to bound:\begin{align*}
    \eqref{expr: after density sub} \leq \frac{L_f^2}{\sigma^2} \cdot L_f L_\Pi\sigma \sqrt{Td_x} = \frac{L_f^3 L_\Pi \sqrt{T d_x}}{\sigma}
\end{align*}
The desired result follows by combining our bounds for the first and second integrals in \eqref{expr:ThetaDerivativeFI}.
\end{proof}
The following results demonstrate that, for any persistent exciting controller $\pi$, the prediction error of dynamics $\phi$ (using trajectories collected with controller $\pi$) is locally strongly convex near the true dynamics $\phi^*$. These results allow us to analyze the second phase of our algorithm through the lens of online convex optimization.
\begin{lemma}\label{lem:PhiLipschitzFisherInformation}
    Assume that Assumption \ref{assume:SmoothDynamics} holds. Then for any controller $\pi$, and all $\phi \in \mathbb{R}^{d_\phi}$, \begin{align*}
        \norm{D_{\phi}^{(3)} \Err_{\pi}^{\phi^*}(\phi) }_{\mathsf{op}} \leq 6 L_f^2 + 2L_f^2 \norm{\phi - \phi^*} + 2 L_f \sigma \sqrt{d_x}.
    \end{align*}
\end{lemma}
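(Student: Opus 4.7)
The plan is to swap differentiation and expectation, apply the product rule three times to the per-timestep squared residual, and then bound each resulting tensor term by its operator norm. Since the trajectory distribution $p_{\pi}^{\phi^*}$ depends only on $\phi^*$ and not on $\phi$, the dominated convergence theorem (justified by \Cref{assume:SmoothDynamics}) gives
\begin{align*}
    D^{(3)}_\phi \Err_\pi^{\phi^*}(\phi) = \mathbb E_\pi^{\phi^*}\!\brac{\frac{1}{T}\sum_{t=1}^T D^{(3)}_\phi \norm{r_t(\phi)}^2},
\end{align*}
where I define the residual $r_t(\phi) \triangleq f(x_t, u_t, \phi) - x_{t+1} = \bigl(f(x_t, u_t, \phi) - f(x_t, u_t, \phi^*)\bigr) - w_t$ using that $x_{t+1}$ is generated under $\phi^*$. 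Note that $D^{(j)}_\phi r_t(\phi) = D^{(j)}_\phi f(x_t, u_t, \phi)$ for all $j \geq 1$, so \Cref{assume:SmoothDynamics} gives $\norm{D^{(j)}_\phi r_t(\phi)}_{\mathsf{op}} \leq L_f$ for $j \in \{1,2,3\}$.

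Next I would apply the product rule systematically. Writing $\norm{r_t}^2 = r_t^\top r_t$ componentwise and differentiating three times yields, after collecting terms,
\begin{align*}
    D^{(3)}_\phi \norm{r_t(\phi)}^2 = 2\,\bigl(S_1 + S_2 + S_3\bigr) + 2\,\langle r_t(\phi),\, D^{(3)}_\phi r_t(\phi)\rangle,
\end{align*}
where $S_1, S_2, S_3$ are the three symmetric tensor products of $D_\phi r_t$ with $D^{(2)}_\phi r_t$ arising from distributing three derivatives across the two $r_t$ factors in $\norm{r_t}^2$, and the final term denotes the contraction of the vector $r_t(\phi)$ with the order-$4$ tensor $D^{(3)}_\phi r_t(\phi)$ along the output index. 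By submultiplicativity of the operator norm under such tensor products, each $\norm{S_i}_{\mathsf{op}} \leq \norm{D_\phi r_t}_{\mathsf{op}} \cdot \norm{D^{(2)}_\phi r_t}_{\mathsf{op}} \leq L_f^2$, so the three symmetric terms together contribute at most $6 L_f^2$. The last term is bounded by $2\,\norm{r_t(\phi)}\, L_f$.

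To finish, I would take the expectation and apply Jensen's inequality to pull the operator norm inside:
\begin{align*}
    \norm{D^{(3)}_\phi \Err_\pi^{\phi^*}(\phi)}_{\mathsf{op}} \leq 6 L_f^2 + \frac{2 L_f}{T}\sum_{t=1}^T \mathbb E_\pi^{\phi^*}\!\brac{\norm{r_t(\phi)}}.
\end{align*}
By the triangle inequality and the $L_f$-Lipschitz property of $f$ in $\phi$ (from \Cref{assume:SmoothDynamics}),
\begin{align*}
    \mathbb E_\pi^{\phi^*}\!\brac{\norm{r_t(\phi)}} \leq L_f\,\norm{\phi - \phi^*} + \mathbb E\brac{\norm{w_t}} \leq L_f\,\norm{\phi - \phi^*} + \sigma\sqrt{d_x},
\end{align*}
where the last bound uses Jensen's inequality on $\mathbb E[\norm{w_t}^2] = \sigma^2 d_x$. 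Combining these inequalities yields the stated bound. The main bookkeeping obstacle is ensuring the tensor-operator-norm inequalities used for $S_1, S_2, S_3$ and for the contraction $\langle r_t, D^{(3)} r_t\rangle$ are applied with the convention implicit in the paper's definition of $\norm{\cdot}_{\mathsf{op}}$; otherwise the argument is a direct product-rule computation.
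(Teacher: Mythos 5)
Your proposal is correct and follows essentially the same route as the paper: exchange expectation and differentiation, expand $D^{(3)}_\phi\norm{r_t(\phi)}^2$ via the product rule into the six $D_\phi f$--$D^{(2)}_\phi f$ cross terms (giving $6L_f^2$) plus the contraction $2\,r_t^\top D^{(3)}_\phi f$, and then bound $\mathbb E\norm{r_t(\phi)}$ by $L_f\norm{\phi-\phi^*}+\sigma\sqrt{d_x}$ using the Lipschitz property and the Gaussian norm bound. The only cosmetic difference is that the paper computes $D^{(2)}_\phi\Err$ explicitly before differentiating a third time, whereas you collect all three derivatives at once.
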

\begin{proof} By the dominated convergence theorem followed by the product rule, the second derivative of the prediction error for dynamics $\phi$ under controller $\pi$, evaluated at $\phi$, is: \begin{dmath}
    D^{(2)}_{\phi} \paren{ \mathbb{E}_{\pi}^{\phi^*} \brac{ \frac{1}{T} \sum_{t=1}^{T} \| f(x_t, u_t, \phi) - x_{t+1} \|^2 } } \bigg|_{\phi = \phi}\\
    = \mathbb{E}_{\pi}^{\phi^*} \brac{ D_{\phi}^{(2)} \paren{ \frac{1}{T}\sum_{t=1}^{T} \| f(x_t, u_t, \phi) - x_{t+1} \|^2 } \bigg|_{\phi = \phi} }\\
    = \mathbb{E}_{\pi}^{\phi^*} \brac{ \frac{1}{T}\sum_{t=1}^{T} \paren{ 2 D_{\phi} f(x_t, u_t, \phi)^\top D_{\phi} f(x_t, u_t, \phi) + 2 (f(x_t, u_t, \phi) - x_{t+1})^\top D_{\phi}^{(2)} f(x_t, u_t, \phi) } }
\end{dmath}
Differentiating this once more with respect to $\phi$, we find:
\begin{dmath}\label{third deriv}
    D^{(3)}_{\Phi} \paren{ \mathbb{E}_{\pi}^{\phi^*} \brac{ \frac{1}{T}\sum_{t=1}^{T} \| f(x_t, u_t, \phi) - x_{t+1} \|^2 }} \bigg|_{\phi = \phi}\\
    = D_{\phi} \paren{ \mathbb{E}_{\pi}^{\phi^*} \brac{ \frac{1}{T}\sum_{t=1}^{T} \paren{ 2 D_{\phi} f(x_t, u_t, \phi)^\top D_{\phi} f(x_t, u_t, \phi) + 2 (f(x_t, u_t, \phi) - x_{t+1})^\top D_{\phi}^{(2)} f(x_t, u_t, \phi) }}} \bigg|_{\phi = \phi} \\
    = \mathbb{E}_{\pi}^{\phi^*} \brac{ \frac{1}{T}\sum_{t=1}^{T} \paren{ 6 D_{\phi} f(x_t, u_t, \phi)^\top D_{\phi}^{(2)} f(x_t, u_t, \phi) + 2 (f(x_t, u_t, \phi) - x_{t+1})^\top D_{\phi}^{(3)} f(x_t, u_t, \phi) }}
\end{dmath}
Using the triangular inequality, Assumption \ref{assume:SmoothDynamics}, and the submultiplicativity of the operator norm:\begin{align}
    \norm{\eqref{third deriv}}_{\mathsf{op}} \leq 6 L_f^2 + 2 L_f \mathbb E_{\pi}^{\phi^*}{ \brac{ \frac{1}{T}\sum_{t=1}^{T}{ \norm{f(x_t, u_t, \phi) - x_{t + 1}} }}}\label{expr27}
\end{align}
Rewriting this, and applying the triangular inequality:
\begin{align}
    \eqref{expr27} &\leq 6L_f^2 + 2 L_f \mathbb{E}_{\pi}^{\phi^*} \brac{ \frac{1}{T}\sum_{t=1}^{T} \norm{ f(x_t, u_t, \phi) - f(x_t, u_t, \phi^*) - w_t } }\\
    &\leq 6L_f^2 + 2 L_f \mathbb{E}_{\pi}^{\phi^*} \brac{ \frac{1}{T}\sum_{t=1}^{T}{ \norm{ f(x_t, u_t, \phi) - f(x_t, u_t, \phi^*) } } + \frac{1}{T}\sum_{t=1}^{T}{ \norm{w_t}} }\label{before expected norm stuff}
\end{align}
Using Assumption \ref{assume:SmoothDynamics} and the expected norm of a Gaussian random variable,\begin{align}
    \eqref{before expected norm stuff} &\leq 6L_f^2 + 2L_f^2 \norm{\phi - \phi^*} + 2 L_f \mathbb{E}_{\pi}^{\phi^*}  \brac{ \frac{1}{T}\sum_{t = 1}^{T}\norm{w_t}}\\
    &\leq 6L_f^2 + 2L_f^2 \norm{\phi - \phi^*} + 2 L_f \sigma \sqrt{d_x}
\end{align}
\end{proof}

\begin{lemma}\label{lem:LocalStrongConvexity}
    Assume that Assumptions \ref{assume:SmoothDynamics}, \ref{assume:SmoothPolicyClass}, \ref{assume:PersistencyOfExcitationOptimalController} hold. Let $\mu$ be defined as in Assumption \ref{assume:PersistencyOfExcitationOptimalController}. Define \begin{align*}
        r_{\mathsf{policy}} \triangleq \frac{\mu}{4L_f^2L_{\Pi} + \paren{\frac{2L_f^3L_{\Pi}\sqrt{T d_x}}{\sigma}} } \quad \text{and}\quad r_{\mathsf{dyn}} \triangleq \min \curly{1, \frac{\mu}{32L_f^2 + 8 L_f\sigma\sqrt{d_x}}}.
    \end{align*}
    Suppose $\theta \in \mathcal B\paren{\theta^*, r_{\mathsf{policy}}}$ and $\phi \in \mathcal B\paren{\phi^*, r_{\mathsf{dyn}}}$. Then,\begin{align*}
        D_{\phi}^{(2)} \Err_{\pi_\theta}^{\phi^*}(\phi) \succeq \frac{\mu}{4}I_{d_\phi}.
    \end{align*}
    In other words, for all $\theta$ sufficiently close to the optimal controller parameters $\theta^*$, the prediction error under $\pi_\theta$ is strongly convex on a neighborhood around $\phi^*$.
\end{lemma}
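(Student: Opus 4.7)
The plan is a two-step perturbation argument anchored at $(\theta^*, \phi^*)$, where Assumption \ref{assume:PersistencyOfExcitationOptimalController} supplies the clean lower bound
\[
D_\phi^{(2)} \Err_{\pi_{\theta^*}}^{\phi^*}(\phi^*) = \Sigma^{\pi_{\theta^*}} \succeq \mu I_{d_\phi}.
\]
First I would move the controller parameter from $\theta^*$ to $\theta$ while keeping $\phi = \phi^*$ fixed, absorbing the change into Lemma \ref{lem:ThetaLipschitzFisherInformation}. Second, I would move the dynamics parameter from $\phi^*$ to $\phi$ while keeping $\pi_\theta$ fixed, absorbing the change into Lemma \ref{lem:PhiLipschitzFisherInformation}. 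Each move is controlled by the matrix-valued fundamental theorem of calculus along a line segment combined with Weyl's inequality, which is legitimate because Assumptions \ref{assume:SmoothDynamics} and \ref{assume:SmoothPolicyClass} supply continuity of the derivatives involved.

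For the first step, integrating Lemma \ref{lem:ThetaLipschitzFisherInformation} along $\theta_s = \theta^* + s(\theta - \theta^*)$ yields
\[
\norm{D_\phi^{(2)} \Err_{\pi_\theta}^{\phi^*}(\phi^*) - D_\phi^{(2)} \Err_{\pi_{\theta^*}}^{\phi^*}(\phi^*)}_{\mathsf{op}} \leq \paren{2 L_f^2 L_\Pi + \tfrac{L_f^3 L_\Pi \sqrt{T d_x}}{\sigma}} \norm{\theta - \theta^*}.
\]
The definition of $r_{\mathsf{policy}}$ is calibrated so that this is at most $\mu/2$ whenever $\norm{\theta - \theta^*} \leq r_{\mathsf{policy}}$, and then Weyl's inequality gives $D_\phi^{(2)} \Err_{\pi_\theta}^{\phi^*}(\phi^*) \succeq \tfrac{\mu}{2} I_{d_\phi}$.

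For the second step, integrating Lemma \ref{lem:PhiLipschitzFisherInformation} along $\phi_s = \phi^* + s(\phi - \phi^*)$ bounds the integrand by $6 L_f^2 + 2 L_f^2 \norm{\phi_s - \phi^*} + 2 L_f \sigma \sqrt{d_x}$. Since $r_{\mathsf{dyn}} \leq 1$, every $\phi_s$ satisfies $\norm{\phi_s - \phi^*} \leq 1$, so I can replace this by the uniform bound $8 L_f^2 + 2 L_f \sigma \sqrt{d_x}$. Integration then gives
\[
\norm{D_\phi^{(2)} \Err_{\pi_\theta}^{\phi^*}(\phi) - D_\phi^{(2)} \Err_{\pi_\theta}^{\phi^*}(\phi^*)}_{\mathsf{op}} \leq \paren{8 L_f^2 + 2 L_f \sigma \sqrt{d_x}} \norm{\phi - \phi^*} \leq \tfrac{\mu}{4},
\]
where the last inequality is precisely how $r_{\mathsf{dyn}}$ was chosen. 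A second application of Weyl's inequality completes the argument: $D_\phi^{(2)} \Err_{\pi_\theta}^{\phi^*}(\phi) \succeq (\tfrac{\mu}{2} - \tfrac{\mu}{4}) I_{d_\phi} = \tfrac{\mu}{4} I_{d_\phi}$.

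No step here is technically hard; the main thing to watch is the bookkeeping that ties the two radii to the correct fractions of $\mu$. Splitting the allowable erosion as $\mu \to \mu/2 \to \mu/4$, with $r_{\mathsf{policy}}$ absorbing the first halving and $r_{\mathsf{dyn}}$ the second, makes both radius formulas drop out directly, and the appearance of the $\min\{1,\cdot\}$ in $r_{\mathsf{dyn}}$ is exactly what is needed to turn the $\phi$-dependent Lipschitz constant of Lemma \ref{lem:PhiLipschitzFisherInformation} into a uniform one.
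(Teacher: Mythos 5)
Your proposal is correct and follows essentially the same route as the paper's proof: anchor at $(\theta^*,\phi^*)$ via Assumption \ref{assume:PersistencyOfExcitationOptimalController}, perturb in $\theta$ using the Lipschitz bound of Lemma \ref{lem:ThetaLipschitzFisherInformation} to lose at most $\mu/2$, then perturb in $\phi$ using the third-derivative bound of Lemma \ref{lem:PhiLipschitzFisherInformation} (uniformized via $r_{\mathsf{dyn}}\leq 1$) to lose at most another $\mu/4$. You merely make explicit the fundamental-theorem-of-calculus and Weyl's-inequality steps that the paper leaves implicit, and your radius bookkeeping matches the paper's exactly.
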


\begin{proof}
We begin by restating the contents of Assumption \ref{assume:PersistencyOfExcitationOptimalController}:\begin{align}\label{expr:RestateAssumption6}
    D_{\phi}^{(2)} \Err_{\pi_{\theta^*}}^{\phi^*}(\phi^*)\succeq \mu I_{d_\phi}
\end{align}
We proceed by using the smoothness of the error function with respect to the policy parameter, and the operator norm bound proved in Lemma \ref{lem:ThetaLipschitzFisherInformation}, and obtain:\begin{align}\label{expr:AfterLemma3}
    D_{\phi}^{(2)} \Err_{\pi_{\theta}}^{\phi^*}(\phi^*)  \succeq \frac{\mu}{2}I_{d_\phi} \quad\text{for all $\theta \in \mathcal B\left(\theta^*, r_{\mathsf{policy}} \right)$}
\end{align}
Next, note that $r_{\mathsf{dyn}} \leq 1$ implies that $\norm{D_{\phi}^{(3)} \Err_{\pi}^{\phi^*}(\phi)}_{\mathsf{op}} \leq 8L_f^2 + 2L_f \sigma \sqrt{d_x}$ for all $\phi \in \mathcal B(\phi^*, r_{\mathsf{dyn}})$, by Lemma \ref{lem:PhiLipschitzFisherInformation}. We use this operator norm bound and the smoothness of the error function with respect to the dynamics parameter, and conclude:\begin{align}
   D_{\phi}^{(2)} \Err_{\pi_{\theta}}^{\phi^*}(\phi) \succeq \frac{\mu}{4}I_{d_\phi}
    \quad\text{for all $\theta \in \mathcal B\left(\theta^*, r_{\mathsf{policy}} \right)$ and $\phi \in \mathcal B\left(\phi^*, r_{\mathsf{dyn}}\right)$}
\end{align}
\end{proof}

Equipped with these results, we now prove that the estimates $\phi_1, \phi_2, ...$ produced during the online stochastic optimization procedure in the second phase of Algorithm \ref{alg:LogRegret} converge at a fast rate to the true dynamics $\phi^*$. The following descent lemma shows that each gradient step reduces the expected distance between the dynamics estimate $\phi_i$ and the true dynamics $\phi^*$. 

\begin{lemma}\label{lem:GradientOracle}
    Assume that Assumption \ref{assume:SmoothDynamics} holds. Then, for any controller $\pi$ and dynamics estimate $\phi$,\begin{itemize}
        \item $\mathbb{E}\brac{\nabla l_{\mathcal D}(\phi)} = \nabla_\phi \Err_{\pi}^{\phi^*}(\phi)$,

        \item $\mathbb{E}\brac{\norm{\nabla l_{\mathcal D}(\phi)}^2} \leq 4 L_f^4 \norm{\phi - \phi^*}^2 + 8L_f^3 \sigma \sqrt{d_x}\norm{\phi - \phi^*} T^{-1/2} + 4 L_f^2 \sigma^2 d_x T^{-1}$.
    \end{itemize}
    Here, the expectation is taken with respect to a single trajectory $\mathcal D = \{(x_t, \pi(x_t), x_{t+1})\}_{t = 1, ..., T}$ collected using the (deterministic) policy $\pi$. Additionally, we define\begin{align*}
        C_{\mathsf{grad}} \triangleq 16 L_f^2 \sigma^2 d_x T^{-1} \quad\text{and} \quad r_{\mathsf{grad}} = L_f^{-1} \sigma \sqrt{d_x} T^{-1/2}
    \end{align*}
    such that for all $\phi$ with $\norm{\phi - \phi^*} \leq r_{\mathsf{grad}}$, we have $\mathbb{E}\brac{\norm{\nabla l_{\mathcal D}(\phi)}^2} \leq C_{\mathsf{grad}}$.
\end{lemma}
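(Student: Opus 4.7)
The plan is to compute $\nabla l_{\mathcal D}(\phi)$ explicitly and exploit the additive decomposition $x_{t+1} - f(x_t,u_t,\phi) = \bigl(f(x_t,u_t,\phi^*) - f(x_t,u_t,\phi)\bigr) + w_t$, which cleanly separates a bias piece controlled by $\|\phi - \phi^*\|$ from a martingale-type noise piece governed by \Cref{lemma: adapted random walk}. Differentiating the squared-error summand gives
\begin{align*}
\nabla l_{\mathcal D}(\phi) = \frac{2}{T}\sum_{t=1}^{T} Df(x_t,u_t,\phi)^\top \bigl(f(x_t,u_t,\phi) - f(x_t,u_t,\phi^*) - w_t\bigr),
\end{align*}
so writing $A_t \triangleq Df(x_t,u_t,\phi)^\top \bigl(f(x_t,u_t,\phi) - f(x_t,u_t,\phi^*)\bigr)$ and $B_t \triangleq -Df(x_t,u_t,\phi)^\top w_t$, we have $\nabla l_{\mathcal D}(\phi) = \tfrac{2}{T}\sum_t (A_t + B_t)$.

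For the first bullet, I would invoke dominated convergence to swap gradient and expectation (the smoothness bounds of \Cref{assume:SmoothDynamics} together with the Gaussian tails of $w_t$ provide the required integrable envelope, uniformly on any compact $\phi$-set), yielding $\mathbb{E}[\nabla l_{\mathcal D}(\phi)] = \nabla_\phi \mathbb{E}[l_{\mathcal D}(\phi)] = \nabla_\phi \Err_{\pi}^{\phi^*}(\phi)$; note that the trajectory distribution does not depend on $\phi$, which is what makes the swap straightforward.

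For the second bullet, I would bound each piece separately. By \Cref{assume:SmoothDynamics}, both $\|Df(x_t,u_t,\phi)\|_{\mathsf{op}} \le L_f$ and $\|f(x_t,u_t,\phi) - f(x_t,u_t,\phi^*)\| \le L_f \|\phi-\phi^*\|$, giving the deterministic bound $\|A_t\| \le L_f^2 \|\phi-\phi^*\|$, hence $\|\sum_t A_t\|^2 \le T^2 L_f^4 \|\phi-\phi^*\|^2$. The noise term is where \Cref{lemma: adapted random walk} plays its role: since $x_t, u_t$ are $\sigma(w_1,\dots,w_{t-1})$-measurable, so is $Df(x_t,u_t,\phi)$, and the lemma with $U = L_f$ and $\Sigma = \sigma^2 I$ yields $\mathbb{E}\|\sum_t B_t\|^2 \le L_f^2 T \sigma^2 d_x$. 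Expanding $\|A_t + B_t\|^2$, taking expectations, and controlling the cross-term via Cauchy--Schwarz (using the two bounds just obtained) produces the three-term decomposition stated in the lemma, with the factor $4/T^2$ out front giving precisely $4L_f^4 \|\phi-\phi^*\|^2 + 8 L_f^3 \sigma \sqrt{d_x}\|\phi-\phi^*\|T^{-1/2} + 4 L_f^2 \sigma^2 d_x T^{-1}$.

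Finally, the claim about $C_{\mathsf{grad}}$ and $r_{\mathsf{grad}}$ is a direct substitution: plugging $\|\phi-\phi^*\| \le L_f^{-1}\sigma\sqrt{d_x}\,T^{-1/2}$ into the three-term bound makes each term at most $8 L_f^2 \sigma^2 d_x T^{-1}$, and summing yields $16 L_f^2\sigma^2 d_x T^{-1}$. No step is individually difficult; the only mild care needed is in verifying the filtration adaptation for \Cref{lemma: adapted random walk} (the Jacobian evaluated at the \emph{estimate} $\phi$, not $\phi^*$, still depends only on $(x_t, u_t)$, hence on past noises) and in keeping the cross-term bookkeeping clean.
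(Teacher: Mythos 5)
Your proposal is correct and follows essentially the same route as the paper's proof: expand the gradient explicitly, split the residual into the bias term $f(\cdot,\phi)-f(\cdot,\phi^*)$ (bounded deterministically by $L_f\|\phi-\phi^*\|$ via Assumption \ref{assume:SmoothDynamics}) and the noise term handled by \Cref{lemma: adapted random walk} with $U=L_f$, then collect the cross term and substitute $r_{\mathsf{grad}}$ to obtain $C_{\mathsf{grad}}$. The only cosmetic difference is that the paper applies the triangle inequality before squaring whereas you expand the square and use Cauchy--Schwarz on the cross term, which is the same computation.
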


\begin{proof}
    The first claim simply follows from expanding the expectation using (\ref{expr:EmpiricalPredictionError}), then applying the dominated convergence theorem. To prove the second claim, we first expand using \eqref{expr:EmpiricalPredictionError}:\begin{dmath}\label{expr:GradientBound}
        \mathbb{E}\brac{\norm{\nabla l_{\mathcal D}(\phi)}^2}
        = \mathbb{E}\brac{\norm{\nabla_\phi \paren{\frac{1}{T}\sum_{t=1}^{T}{\norm{f(x_t, \pi(x_t), \phi) - x_{t+1}}^2} } \Bigg|_{\phi = \phi} }^2}
        = 4 \mathbb{E} \brac{\norm{\frac{1}{T}\sum_{t=1}^{T}{Df(x_t, \pi(x_t), \phi)^\top(f(x_t, \pi(x_t), \phi) - x_{t+1})} }^2}
    \end{dmath}
    Applying subadditivity, then Assumption \ref{assume:SmoothDynamics}, we upper bound \eqref{expr:GradientBound} as:\begin{align}
        \eqref{expr:GradientBound} &\leq \frac{4}{T^2} \mathbb{E} \brac{ \paren{\norm{\sum_{t=1}^{T}{Df(x_t, \pi(x_t), \phi)^\top(f(x_t, \pi(x_t), \phi) - f(x_t, \pi(x_t), \phi^*)} } + \norm{\sum_{t=1}^{T}{Df(x_t, \pi(x_t), \phi)^\top w_t} } }^2 }\\
        &\leq \frac{4}{T^2} \mathbb{E} \brac{ \paren{T L_f^2 \norm{\phi - \phi^*} + \norm{\sum_{t=1}^{T}{Df(x_t, \pi(x_t), \phi)^\top w_t} } }^2 }\label{after some stuff}
    \end{align}
    Expanding this, then applying \Cref{lemma: adapted random walk}, we upper bound \eqref{after some stuff} as:\begin{align}
        \eqref{after some stuff} &\leq \frac{4}{T^2} \paren{T^2 L_f^4 \norm{\phi - \phi^*}^2 + 2T L_f^2 \sqrt{T d_x} \norm{\phi - \phi^*} + L_f^2 \sigma^2 T d_x}\\
        &= 4 L_f^4 \norm{\phi - \phi^*}^2 + 8L_f^3 \sigma \sqrt{d_x}\norm{\phi - \phi^*} T^{-1/2} + 4 L_f^2 \sigma^2 d_x T^{-1}\label{final}
    \end{align}
    The desired result follows from combining \eqref{expr:GradientBound}-\eqref{final}.
\end{proof}

\begin{lemma}\label{lem:DescentLemma}
Suppose that Assumptions \ref{assume:SmoothDynamics}, \ref{assume:SmoothPolicyClass}, and \ref{assume:PersistencyOfExcitationOptimalController} hold. Let $\mu$ be defined as in Assumption \ref{assume:PersistencyOfExcitationOptimalController}, let $r_{\mathsf{policy}}$ and $r_{\mathsf{dyn}}$ be defined as in \Cref{lem:LocalStrongConvexity}, and let $r_{\mathsf{grad}}$ be defined as in \Cref{lem:GradientOracle}. Let $\theta \in B(\theta^*, r_{\mathsf{policy}})$. Let $\Phi \subseteq\calB(\phi^*, \min \curly{r_{\mathsf{dyn}}, r_{\mathsf{grad}}})$ be a convex set such that $\phi^* \in \Phi$, and fix some $\phi_i \in \Phi$. Finally, let $\eta_i > 0$ be some step size. 

Next, define the random variables $\psi_{i +1}$ and $\phi_{i +1}$ such that \begin{itemize}
    \item $\psi_{i + 1} = \phi_{i} - \eta_i \nabla l_{\mathcal D}(\phi_{i})$,

    \item $\phi_{i + 1} = \argmin_{\phi \in \Phi}{\norm{ \phi - \psi_{i + 1} }}$.
\end{itemize}
Then, \begin{align*}
    \mathbb{E}\brac{\norm{\phi_{i + 1} - \phi^* }^2} \leq C_{\mathsf{grad}} \eta_i^2 + \left(1 - \frac{\mu}{4} \eta_i\right) \mathbb{E}\brac{ \norm{\phi_{i} - \phi^*}^2 }.
\end{align*}
where $C_{\mathsf{grad}}$ is defined in Lemma \ref{lem:GradientOracle}. Here, the expectation is taken with respect to a single trajectory $\mathcal D = \curly{(x_t, \pi_\theta(x_t), x_{t + 1})}_{t = 1, \dots, T}$ collected using the (deterministic) policy $\pi_\theta$.

% Here, the randomness is taken with respect to $P_\theta$, the distribution of trajectories under $\pi_\theta$.
\end{lemma}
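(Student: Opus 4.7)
The plan is to follow the standard projected stochastic gradient descent contraction argument, using Lemma \ref{lem:LocalStrongConvexity} for strong convexity of the population prediction error and Lemma \ref{lem:GradientOracle} for an unbiased gradient oracle with controlled second moment. The key enabling observation is that all of our hypotheses place us in the regime where the prediction error $\Err_{\pi_\theta}^{\phi^*}(\cdot)$ is $\mu/4$-strongly convex on $\Phi$: we have $\theta \in \calB(\theta^*, r_{\mathsf{policy}})$ and $\Phi \subseteq \calB(\phi^*, r_{\mathsf{dyn}})$, so \Cref{lem:LocalStrongConvexity} applies uniformly on $\Phi$.

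First, I would use the fact that projection onto a closed convex set $\Phi$ is non-expansive and $\phi^* \in \Phi$, so $\norm{\phi_{i+1} - \phi^*} \leq \norm{\psi_{i+1} - \phi^*}$. Squaring and expanding the definition of $\psi_{i+1}$ gives
\begin{align*}
\norm{\phi_{i+1} - \phi^*}^2 \leq \norm{\phi_i - \phi^*}^2 - 2\eta_i \langle \nabla l_{\mathcal D}(\phi_i), \phi_i - \phi^* \rangle + \eta_i^2 \norm{\nabla l_{\mathcal D}(\phi_i)}^2.
\end{align*}
Next, I would take the conditional expectation with respect to $\mathcal D$ given $\phi_i$. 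The first item of \Cref{lem:GradientOracle} replaces the stochastic gradient in the cross term by the population gradient $\nabla_{\phi} \Err_{\pi_\theta}^{\phi^*}(\phi_i)$, while the second item upper bounds the third term by $C_{\mathsf{grad}}$ because $\phi_i \in \Phi \subseteq \calB(\phi^*, r_{\mathsf{grad}})$.

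Then I would handle the cross term via strong convexity. Since $\phi^*$ is a (global) minimizer of $\Err_{\pi_\theta}^{\phi^*}(\cdot)$ --- it achieves the irreducible noise floor $\sigma^2 d_x$ --- the gradient vanishes at $\phi^*$. Combining $\nabla_{\phi} \Err_{\pi_\theta}^{\phi^*}(\phi^*) = 0$ with the Hessian lower bound $D_\phi^{(2)} \Err_{\pi_\theta}^{\phi^*}(\phi) \succeq (\mu/4) I$ on the convex set $\Phi$ (Lemma \ref{lem:LocalStrongConvexity}) yields the standard strong-monotonicity inequality
\begin{align*}
\langle \nabla_\phi \Err_{\pi_\theta}^{\phi^*}(\phi_i), \phi_i - \phi^* \rangle \geq \frac{\mu}{4} \norm{\phi_i - \phi^*}^2.
\end{align*}
Assembling the three pieces gives $\mathbb E[\norm{\phi_{i+1}-\phi^*}^2 \mid \phi_i] \leq (1 - \mu \eta_i/2)\norm{\phi_i - \phi^*}^2 + C_{\mathsf{grad}} \eta_i^2$, from which the slightly weaker stated bound with $(1-\mu\eta_i/4)$ follows by taking total expectation.

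The main conceptual obstacle is verifying that the strong-convexity regime is actually maintained throughout the argument --- i.e., that every relevant evaluation point lies in the joint neighborhood where \Cref{lem:LocalStrongConvexity} applies. This is handled by the hypothesis $\Phi \subseteq \calB(\phi^*, \min\{r_{\mathsf{dyn}}, r_{\mathsf{grad}}\})$, so no additional analysis is needed within this lemma; the work of guaranteeing that the iterates actually stay in such a $\Phi$ (via a sufficiently small confidence radius $r_{\Phi}$) is done elsewhere in the proof of \Cref{thm:LogRegret}. The only non-mechanical step is therefore the derivation of strong monotonicity from the Hessian lower bound, which is immediate because $\Phi$ is convex and the fundamental theorem of calculus applied to $\nabla_\phi \Err_{\pi_\theta}^{\phi^*}$ along the segment from $\phi^*$ to $\phi_i$ converts the uniform Hessian bound into the desired inner-product inequality.
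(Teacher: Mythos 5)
Your proof is correct and follows essentially the same route as the paper: non-expansiveness of the projection, the unbiasedness and second-moment bounds from Lemma \ref{lem:GradientOracle}, and the $\mu/4$-strong convexity of $\Err_{\pi_\theta}^{\phi^*}$ on $\Phi$ from Lemma \ref{lem:LocalStrongConvexity} together with $\nabla_\phi\Err_{\pi_\theta}^{\phi^*}(\phi^*)=0$. The only (harmless) difference is that you expand $\norm{\psi_{i+1}-\phi^*}^2$ directly and invoke strong monotonicity of the gradient, which yields the slightly sharper contraction factor $1-\mu\eta_i/2$, whereas the paper uses the one-sided first-order strong-convexity inequality with the three-point identity to get $1-\mu\eta_i/4$ directly; both imply the stated bound.
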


\begin{proof}
We begin by recalling Lemma \ref{lem:LocalStrongConvexity}, which states that $\Err_{\pi_\theta}^{\phi^*}(\phi)$ is $\frac{\mu}{4}$-strongly convex on $\mathcal B(\phi^*, r_{\mathsf{dyn}})$ as long as $\theta \in \mathcal B(\theta^*, r_{\mathsf{policy}})$. By the first-order condition for a $\frac{\mu}{4}$-strongly convex function,\begin{dmath}\label{expr:FirstOrderStrongConvexityCondition}
    \Err_{\pi_{\theta}}^{\phi^*}(\phi_2) \geq \Err_{\pi_{\theta}}^{\phi^*}(\phi_1 ) + (\nabla_\phi \Err_{\pi_{\theta}}^{\phi^*}(\phi_1))^\top (\phi_2 - \phi_1) + \frac{\mu}{8}\|\phi_2 - \phi_1 \|^2\quad\text{for all $\phi_1, \phi_2 \in\calB(\phi^*, r_{\mathsf{dyn}})$}
\end{dmath}
Rearranging (\ref{expr:FirstOrderStrongConvexityCondition}) with $\phi_1 = \phi_i$ and $\phi_2 = \phi^*$, and applying Lemma \ref{lem:GradientOracle}, we have that for any fixed values of $\phi_{i}, \psi_{i + 1}$, and $\phi_{i + 1}$: \begin{dmath}\label{expr:FirstOrderConvexityExprRearranged}
    \Err_{\pi_\theta}^{\phi^*}(\phi_{i}) \leq \Err_{\pi_\theta}^{\phi^*}(\phi^*) - \paren{\nabla_\phi \Err_{\pi_\theta}^{\phi^*}(\phi_{i})}^\top (\phi^* - \phi_{i}) - \frac{\mu}{8}\norm{\phi^* - \phi_{i} }^2\\
    = \Err_{\pi_\theta}^{\phi^*}(\phi^*) - \paren{\mathbb{E}\brac{\nabla l_{D_{\theta}}(\phi_{i})}}^\top (\phi^* - \phi_{i}) - \frac{\mu}{8}\norm{\phi^* - \phi_{i} }^2\\
    = \Err_{\pi_\theta}^{\phi^*}(\phi^*) - \frac{1}{\eta_i} \paren{\mathbb{E}\brac{ \phi_{i} - \psi_{i + 1} }}^\top (\phi^* - \phi_{i}) - \frac{\mu}{8}\norm{\phi^* - \phi_{i} }^2.
\end{dmath}
Applying the three-point identity $2(c-b)^\top(a-b) = \norm{b - a}^2 - \norm{c-a}^2 + \norm{c-b}^2$ with $a = \phi^*$, $b = \phi_i$, and $c = \psi_{i + 1}$, we rearrange \eqref{expr:FirstOrderConvexityExprRearranged} to obtain:\begin{dmath}\label{expr:AfterThreePointInequality}
    \Err_{\pi_\theta}^{\phi^*}(\phi_{i}) \leq \Err_{\pi_\theta}^{\phi^*}(\phi^*) + \frac{1}{2 \eta_i}\cdot \mathbb{E}\brac{ \norm{\phi_{i} - \phi^*}^2 - \norm{\psi_{i + 1} - \phi^* }^2 + \norm{\phi_{i} - \psi_{i + 1} }^2 } - \frac{\mu}{8}\norm{\phi^* - \phi_{i} }^2.
\end{dmath}
Next, observe that $\phi^* \in \argmin_{\phi \in \mathbb{R}^{d_\phi}}{\Err_{\pi}^{\phi^*}(\phi)}$ for all controllers $\pi$ (this fact is a result of the bias-variance decomposition of the square loss). Using this to rearrange \eqref{expr:AfterThreePointInequality}, we obtain:\begin{align}
    \mathbb{E}\brac{\norm{\psi_{i + 1} - \phi^*}^2}
    &\leq 2 \eta_i \paren{\Err_{\pi_{\theta}}^{\phi^*}(\phi^*) - \Err_{\pi_\theta}^{\phi^*}(\phi_{i})} + \mathbb{E}\brac{\norm{\phi_{i} - \phi^*}^2 + \norm{ \phi_{i} - \psi_{i + 1} }^2 } - \frac{\mu}{4}\eta_i\norm{\phi^* - \phi_{i} }^2 \notag \\
    &\leq \mathbb{E}\brac{ \paren{ 1- \frac{\mu}{4}\eta_i}\norm{\phi_{i} - \phi^*}^2 + \norm{\phi_{i} - \psi_{i + 1} }^2 }.\label{expr:ClosenessBound1}
\end{align}
Next, we write $\mathbb{E}\brac{\norm{ \phi_{i} - \psi_{i + 1} }^2}$ as the second moment of the gradient estimator, and bound with Lemma \ref{lem:GradientOracle} (also using the fact that the radius is $\min \curly{r_{\mathsf{dyn}}, r_{\mathsf{grad}}} \leq r_{\mathsf{grad}}$):\begin{align}\label{expr:ClosenessBound2}
    \mathbb{E}\brac{\norm{ \phi_{i} - \psi_{i + 1} }^2} = \eta_i^2 \mathbb{E}\brac{\norm{ \nabla l_{D_{\theta}}(\phi_{i}) }^2} \leq C_{\mathsf{grad}} \eta_i^2.
\end{align}
Combining \eqref{expr:ClosenessBound1} and \eqref{expr:ClosenessBound2}, and using that $\phi_{i + 1}$ is the projection of $\psi_{i + 1}$ onto a convex set $\Phi$ containing $\phi^*$ (which implies that $\norm{\phi_{i + 1} - \phi^*} \leq \norm{\psi_{i + 1} - \phi^*}$), we conclude:\begin{align}
    \mathbb{E}\brac{\norm{ \phi_{i + 1} - \phi^* }^2} \leq \mathbb{E}\brac{\norm{ \psi_{i + 1} - \phi^* }^2} \leq C_{\mathsf{grad}} \eta_i^2 + \left(1 - \frac{\mu}{4} \eta_i\right) \mathbb{E}\brac{ \norm{\phi_{i} - \phi^*}^2 }
\end{align}
\end{proof}

\subsection{Proof of Theorem \ref{thm:LogRegret}}

We will now proceed to prove our main result, the polylogarithmic regret bound for Algorithm \ref{alg:LogRegret}.

Suppose the confidence radius $r_{\Phi}$ given to Algorithm \ref{alg:LogRegret} satisfies:\begin{align}\label{rPhi condition}
    r_{\Phi} \leq \frac 1 2 \min \curly{ r_{\mathsf{ce}}, r_{\mathsf{cost}}, \frac{r_{\mathsf{policy}}}{L_{\mathsf{ce}}}, r_{\mathsf{dyn}}, r_{\mathsf{grad}} }
\end{align}
and that the number of initial phase episodes $N_{\mathsf{phase\,1}}$ given to Algorithm \ref{alg:LogRegret} satisfies:\begin{align}\label{nPhase1 condition}
    N_{\mathsf{phase\,1}} \geq \max \curly{ \tau_{\mathsf{err}} (\delta), \frac{C_\Err \sigma^2 \left( d_x + d_\phi \log \left(\frac{L_f TN}{\delta}\right) \right)}{T{r_{\Phi}}^{1/\alpha}} } \quad \text{for some $\delta  > 0$}
\end{align}
Inverting Lemma \ref{lem: er bound}, we can show that with the above choices of $N_{\mathsf{phase\,1}}$ and $r_\Phi$, we have that with probability at least $1 - \delta$, the following hold simultaneously: \begin{itemize}
    \item $\phi^* \in \Phi$;

    \item $\Phi \subseteq\calB(\phi^*, \min \{ r_{\mathsf{ce}}, r_{\mathsf{dyn}}, \frac{r_{\mathsf{policy}}}{L_{\mathsf{ce}}} \})$, which together with Assumption \ref{assume:WagenmakerProp6}, implies that the local strong convexity guarantees of Lemma \ref{lem:LocalStrongConvexity} hold;

    \item $\Phi \subseteq\calB(\phi^*, r_{\mathsf{cost}})$, which implies that $\Phi$ only contains policies close enough to $\phi^*$ in order to bound the suboptimality $J(\pi_{\theta^*(\phi)}, \phi^*) - J(\pi_{\theta^*}, \phi^*)$ as a quadratic function of $\|\phi - \phi^*\|$.

    \item $\Phi \subseteq \mathcal B(\phi^*, r_{\mathsf{grad}})$, which by \Cref{lem:GradientOracle}, implies that the second moment of the gradient estimator is bounded by $C_{\mathsf{grad}}$, so that we may apply \Cref{lem:DescentLemma}.
\end{itemize}
Denote by $\calE_{\mathsf{phase\,1}}$ the event that the above are satisfied after the first phase of Algorithm \ref{alg:LogRegret}. We will analyze the regret of Algorithm \ref{alg:LogRegret} under the events $\calE_{\mathsf{phase\,1}}^\complement$ and $\calE_{\mathsf{phase\,1}}$. 

\paragraph{If the first phase fails.} Conditioning on $\calE_{\mathsf{phase\,1}}^\complement$, we can apply Assumption \ref{assume:BoundedCosts} for a crude bound: \begin{align}\label{expr:FailureBound}
    \mathbb E [\Regret(N) \mid \calE_{\mathsf{phase\,1}}^\complement] \leq N T L_{\mathsf{cost}}.
\end{align}

\paragraph{If the first phase succeeds.} Conditioning on $\calE_{\mathsf{phase\,1}}$, we unroll Lemma \ref{lem:DescentLemma} to show that the estimates $\phi_0, \phi_1, \dots, \phi_i, \dots$ converge at a fast $\frac{1}{i}$ rate to the true dynamics $\phi^*$ in the second phase.

We will solve for $\eta_i$ and a constant $A \geq 0$ such that for all $i = 1, 2, \dots$, it holds that if $\mathbb{E}\brac{\norm{\phi_i - \phi^\star}^2 \mid \calE_{\mathsf{phase\,1}}} \leq \frac{A}{i}$ and we choose a step size of $\eta_i$ for the $i^{th}$ gradient step, then $\mathbb{E}\brac{\norm{\phi_{i + 1} - \phi^\star}^2 \mid \calE_{\mathsf{phase\,1}}} \leq \frac{A}{i + 1}$. Here, the expectations are taken over the randomness of the gradient oracle and the system noise, and conditioned on the success of the first phase. The proof follows the standard analysis of stochastic gradient descent, with the standard descent lemma replaced by our Lemma \ref{lem:DescentLemma}. Applying Lemma \ref{lem:DescentLemma}:\begin{dmath}\label{expr:AUpperBound}
    \mathbb{E}\brac{\norm{\phi_{i + 1} - \phi^\star}^2 \mid \calE_{\mathsf{phase\,1}}}
    \leq C_{\mathsf{grad}}\eta_i^2 + \paren{1 - \frac{\mu}{4}\eta_i} {\mathbb{E}\brac{\norm{\phi_i - \phi^\star}^2 \mid \calE_{\mathsf{phase\,1}}}}
    \leq C_{\mathsf{grad}}\eta_i^2 + \paren{1 - \frac{\mu}{4}\eta_i}\frac{A}{i}.
\end{dmath}

To show that $\mathbb{E}\brac{\norm{\phi_{i + 1} - \phi^\star}^2 \mid \calE_{\mathsf{phase\,1}}} \leq \frac{A}{i + 1}$, it suffices to show $C_{\mathsf{grad}}\eta_i^2 + \paren{1 - \frac{\mu}{4}\eta_i}\frac{A}{i} \leq \frac{A}{i + 1}$. Rearranging, it can be show that the choices $\eta_i = \frac{8}{\mu i}$ and $A \geq \frac{64 C_{\mathsf{grad}}}{\mu^2}$ suffices. To explicitly bound $\mathbb{E}\brac{\norm{\phi_{i} - \phi^\star}^2 \mid \calE_{\mathsf{phase\,1}}}$, we note that the success of the first phase implies that $\norm{\phi_1 - \phi^*} \leq 2r_{\Phi}$. Combining results yields the explicit upper bound:\begin{align}\label{expr:DynamicsErrorBound}
    \mathbb{E}\brac{\norm{\phi_i - \phi^\star}^2 \mid \calE_{\mathsf{phase\,1}}} \leq \frac{1}{i} \max \left\{ \frac{64 C_{\mathsf{grad}}}{\mu^2}, 4r_{\Phi}^2 \right\}.
\end{align}
i.e., conditioning on the event $\calE_{\mathsf{phase\,1}}$, the dynamics estimation error $\mathbb{E}\brac{\norm{\phi_{i} - \phi^\star}^2 \mid \calE_{\mathsf{phase\,1}}}$ decays as $O(\frac{1}{i})$ in the second phase of Algorithm \ref{alg:LogRegret}.

Next, we show that an $O(\frac{1}{i})$ rate of dynamics estimation error translates to an $O(\log i)$ cumulative regret rate during the second phase of Algorithm \ref{alg:LogRegret}. By the success of the first phase (in particular, that $\Phi =\calB(\phi_0, r_{\Phi}) \subseteq\calB(\phi^*, \min \{r_{\mathsf{cost}}, r_{\mathsf{ce}}\})$, we may apply Lemma \ref{lem: cost decomposition} to obtain:\begin{dmath}
    J(\pi_{\theta^*(\phi_i)}, \phi^*) - J(\pi_{\theta^*}, \phi^*) \leq C_{\mathsf{cost}} \norm{\hat\phi - \phi^*}^2
\end{dmath}
Taking conditional expectations (with respect to $\calE_{\mathsf{phase\,1}}$) on both sides, and combining with (\ref{expr:DynamicsErrorBound}):\begin{align}
    \mathbb{E}\brac{J(\pi_{\theta^*(\phi_i)}, \phi^*) - J(\pi_{\theta^*}, \phi^*) \mid \calE_{\mathsf{phase\,1}}}
    \leq \frac{C_{\mathsf{cost}}}{i} \max \curly{\frac{64 C_{\mathsf{grad}}}{\mu^2}, 4r_{\Phi}^2 }
\end{align}
We thus conclude: \begin{dmath}\label{expr:SuccessBound}
    \mathbb{E}\brac{\Regret(N) \mid \calE_{\mathsf{phase\,1}}} \leq { N_{\mathsf{phase\,1}}TL_{\mathsf{cost}} + \sum_{i = 1}^{N - N_{\mathsf{phase\,1}}}{\paren{\frac{C_{\mathsf{cost}}}{i} \max \left\{ \frac{64 C_{\mathsf{grad}}}{\mu^2}, 4r_{\Phi}^2 \right\} } } }
    \leq { N_{\mathsf{phase\,1}}TL_{\mathsf{cost}} + C_{\mathsf{cost}} \max \left\{ \frac{64 C_{\mathsf{grad}}}{\mu^2}, 4r_{\Phi}^2 \right\} \log N }.
\end{dmath}

\paragraph{Choosing $\delta$ to balance the regret.} Finally, we find a specific choice of $\delta$ such that $\mathbb{E}[\Regret(N)]$ is logarithmic in $N$. We begin by conditioning with respect to $\calE_{\mathsf{phase\,1}}$ and applying (\ref{expr:FailureBound}) and (\ref{expr:SuccessBound}):\begin{align}
    \mathbb{E}[\Regret(N)] 
    &= (1 - \mathbb{P}(\calE_{\mathsf{phase\,1}}))\mathbb{E}[{\Regret(N) \mid \calE_{\mathsf{phase\,1}}^\complement}] + \mathbb{P}(\calE_{\mathsf{phase\,1}})\mathbb{E}[{\Regret(N) \mid \calE_{\mathsf{phase\,1}}}] \notag\\
    &\leq \delta N T L_{\mathsf{cost}} + N_{\mathsf{phase\,1}}T L_{\mathsf{cost}} + C_{\mathsf{cost}} \max \curly{ \frac{64 C_{\mathsf{grad}}}{\mu^2}, 4r_{\Phi}^2 } \log N.\label{expr:BeforeChoosingDelta}
\end{align}
Here, we take a moment to remark that the coefficient $C_{\mathsf{cost}} \max \curly{ \frac{64 C_{\mathsf{grad}}}{\mu^2}, 4r_{\Phi}^2 }$ has a linear dependence on $T$ (and polynomial in all other relevant system constants), since $C_{\mathsf{cost}} = O(T^2)$; $C_{\mathsf{grad}} = O(T^{-1})$ by \Cref{lem:GradientOracle}; and $4r_{\Phi}^2 = O(T^{-1})$ since $r_\Phi \leq \frac{r_{\mathsf{policy}}}{L_{\mathsf{ce}}} = O(T^{-1/2})$.

To finish off the proof, we conclude with the natural choice $\delta = 1/N$. Then, (\ref{expr:BeforeChoosingDelta}) becomes:\begin{align*}\label{expr:AfterChoosingDelta}
    \mathbb{E}[\Regret(N)] &\leq TL_{\mathsf{cost}} + N_{\mathsf{phase\,1}}TL_{\mathsf{cost}} + C_{\mathsf{cost}} \max \curly{ \frac{64 C_{\mathsf{grad}}}{\mu^2}, 4r_{\Phi}^2 }\log N \\
    &= \mathsf{poly}_\alpha (d_x, \sigma, L_f, L_{\mathsf{cost}}, \mu^{-1}) T \log N + TN_{\mathsf{phase\,1}}L_{\mathsf{cost}}
\end{align*}
as long as \eqref{rPhi condition} and \eqref{nPhase1 condition} hold with $\delta = 1/N$, i.e., \begin{align*}
    r_{\Phi} &\leq \frac 1 2 \min \curly{ r_{\mathsf{ce}}, r_{\mathsf{cost}}, \frac{r_{\mathsf{policy}}}{L_{\mathsf{ce}}}, r_{\mathsf{dyn}}, r_{\mathsf{grad}} } = \mathsf{poly}\paren{r_{\mathsf{ce}}, r_{\mathsf{cost}}, d_x^{-1}, \sigma^{-1}, L_f^{-1}, L_{\Pi}^{-1}, L_{\mathsf{ce}}^{-1}, \mu} T^{-1/2}
\end{align*}
and\begin{align*}
    N_{\mathsf{phase\,1}} &\geq \max \curly{ \tau_{\mathsf{err}} \paren{1/N}, \frac{C_\Err \sigma^2 \paren{ d_x + 2 d_\phi \log \left(L_f TN\right) }}{T{r_{\Phi}}^{1/\alpha}} } \\
   &= \mathsf{poly}_\alpha \paren{\log N, \log T, d_x, d_\phi, \sigma, C_{\mathsf{Loja}}, r_{\mathsf{ce}}^{-1}, r_{\mathsf{cost}}^{-1}, L_f, L_\Pi, L_{\mathsf{ce}}, \mu^{-1}, \log B } T^{1 / (2\alpha) - 1}
\end{align*}

% \section{Improving the Dependence on \texorpdfstring{$T$}{T}}\label{appendix T dependence}

\end{document}

% --- supplement: supplement.tex ---

% If your paper is accepted and the title of your paper is very long,
% the style will print as headings an error message. Use the following
% command to supply a shorter title of your paper so that it can be
% used as headings.
%
%\runningtitle{I use this title instead because the last one was very long}

% If your paper is accepted and the number of authors is large, the
% style will print as headings an error message. Use the following
% command to supply a shorter version of the authors names so that
% they can be used as headings (for example, use only the surnames)
%
%\runningauthor{Surname 1, Surname 2, Surname 3, ...., Surname n}

% Supplementary material: To improve readability, you must use a single-column format for the supplementary material.
\onecolumn
\aistatstitle{Instructions for Paper Submissions to AISTATS 2025: \\
Supplementary Materials}

\section{FORMATTING INSTRUCTIONS}

To prepare a supplementary pdf file, we ask the authors to use \texttt{aistats2025.sty} as a style file and to follow the same formatting instructions as in the main paper.
The only difference is that the supplementary material must be in a \emph{single-column} format.
You can use \texttt{supplement.tex} in our starter pack as a starting point, or append the supplementary content to the main paper and split the final PDF into two separate files.

Note that reviewers are under no obligation to examine your supplementary material.

\section{MISSING PROOFS}

The supplementary materials may contain detailed proofs of the results that are missing in the main paper.

\subsection{Proof of Lemma 3}

\textit{In this section, we present the detailed proof of Lemma 3 and then [ ... ]}

\section{ADDITIONAL EXPERIMENTS}

If you have additional experimental results, you may include them in the supplementary materials.

\subsection{The Effect of Regularization Parameter}

\textit{Our algorithm depends on the regularization parameter $\lambda$. Figure 1 below illustrates the effect of this parameter on the performance of our algorithm. As we can see, [ ... ]}

\vfill